\def\sS{{\mathbb{S}}}
\def\sR{{\mathbb{R}}}
\def\sA{{\mathbb{A}}}
\def\hrepr{{\mathbf{H}}}
\def\cL{{\mathcal{L}}}
\def\cR{{\mathcal{R}}}
\def\cS{{\mathcal{S}}}
\def\cT{{\mathcal{T}}}
\def\cV{{\mathcal{V}}}
\DeclareMathAlphabet{\mathsfit}{\encodingdefault}{\sfdefault}{m}{sl}
\SetMathAlphabet{\mathsfit}{bold}{\encodingdefault}{\sfdefault}{bx}{n}
\newcommand{\tens}[1]{\bm{\mathsfit{#1}}}
\def\tA{{\tens{A}}}
\newcommand{\kgadj}{\tA}
\newcommand{\kgspace}{\sA_{N, R}}
\newcommand{\namekg}{attributed graph\xspace}
\newcommand{\namekgs}{attributed graphs\xspace}
\newcommand{\nameKGs}{Attributed Graphs\xspace}
\newcommand{\namedekgs}{double-exchangeable \namekgs}
\newcommand{\namemtdekg}{multi-task double-exchangeable \namekg}
\newcommand{\namemtdekgs}{multi-task double-exchangeable \namekgs}
\newcommand{\nameMtdekg}{Multi-task double-exchangeable \namekg}
\newcommand{\nameLAYER}{Multi-Task Double-Equivariant Linear Layer\xspace}
\newcommand{\namelayerabbr}{MTDE linear layer\xspace}
\newcommand{\namelayersabbr}{MTDE linear layers\xspace}
\newcommand{\nameMODEL}{Multi-Task Double-Equivariant Architecture\xspace}
\theoremstyle{plain}
\newtheorem{theorem}{Theorem}[section]
\theoremstyle{definition}
\newtheorem{definition}[theorem]{Definition}
\newcommand{\revision}[1]{{\color{black}#1}}
\newcommand{\revisioncamera}[1]{{\color{black}#1}}
\newcommand{\eat}[1]{}
\newcites{APP}{Additional References in the Appendix}
\title{A Multi-Task Perspective for Link Prediction with New Relation Types and Nodes}
\author{%
    Jincheng Zhou \\
    Purdue University\\
    \texttt{zhou791@purdue.edu} \\
    \And
    Beatrice Bevilacqua \\
    Purdue University\\
    \texttt{bbevilac@purdue.edu} \\
    \And
    Bruno Ribeiro \\
    Purdue University \\
    \texttt{ribeiro@cs.purdue.edu} \\
}
\begin{document}

\maketitle

\begin{abstract}

The task of inductive link prediction in (discrete) attributed multigraphs infers missing attributed links (relations) between nodes in new test multigraphs. Traditional relational learning methods face the challenge of limited generalization to test multigraphs containing both novel nodes and novel relation types not seen in training. Recently, under the only assumption that all relation types share the same {\bf structural} predictive patterns (single task), Gao et al.\ (2023) proposed a link prediction method using the theoretical concept of {\em double equivariance}  (equivariance for nodes \& relation types), in contrast to the (single) equivariance (only for nodes) used to design Graph Neural Networks (GNNs). In this work we further extend the double equivariance concept to {\em multi-task double equivariance}, where we define link prediction in attributed multigraphs that can have distinct and potentially conflicting predictive patterns for different sets of relation types (multiple tasks). Our empirical results on real-world datasets demonstrate that our approach can effectively generalize to test graphs with multi-task structures without access to additional information.
\end{abstract}

\section{Introduction}
Discrete attributed multigraphs  \revision{(e.g., knowledge graphs, multilayer networks, heterogeneous networks, etc.)}, which we refer as \namekgs for simplicity, have been widely used for modeling relational data, which can also be expressed as a collection of triplets. 
Storing factual knowledge in \namekgs enables their application across a wide variety of tasks, encompassing complex question answering~\citep{fu2020survey,huang2022endowing} and logical reasoning~\citep{chen2020review}. 
Since relational data is often incomplete, predicting missing triplets, or, equivalently, predicting the existence of a relation of a certain type between a pair of nodes is an important task. 
However, conventional methods are generally limited to predicting missing links for relation types observed during training.
As a consequence, standard attributed link prediction methods are incapable of making predictions that involve completely new relation types over completely new nodes (or new graphs), which is arguably the most difficult and perhaps the most interesting link prediction task in \namekgs.

In this work we focus on the task of predicting missing triplets in test \namekgs that contain 
\revision{\textit{completely} new nodes and new relation types (i.e., no training nodes and no training relations).}
We assume that no extra information is available either at train or test time, apart from the input (observable) graph with its nodes and relation types. As a result, existing zero-shot methods~\citep{qin2020generative,geng2021ontozsl,li2022hierarchical}, which rely on textual descriptions
\revision{and/or ontological information}
of the relation types, and few-shot learning methods~\citep{xiong2018one,chen2019meta,sun2021one,zhang2020few,qian2022few}, 
\revision{which require a shared observable graph between train and test set,}
are unable to perform our task. 
\revision{Recently, \citet{kg-equivariance,lee2023ingram} have tackled this problem by proposing novel approaches capable of generalizing to completely new nodes and relation types without requiring any extra information. More precisely, in order to solve this task, \citet{kg-equivariance} introduced double (permutation) equivariant models for attributed graphs. Intuitively, these models treat every type of relation and all nodes as interchangeable with each other, effectively capturing what can be referred to as the double exchangeability assumption.}
Since the new test relation types are likewise assumed to be exchangeable with the training ones, the missing test triplets can be predicted directly using knowledge acquired during training. However, in some real-world \namekgs, this double exchangeability may leave performance on the table since it can make the model excessively equivariant.
For instance, consider the scenario in \Cref{fig:intro}  where the training graph comprises two weakly-connected knowledge bases representing different sports communities (racing and gymnastic) with some different relation types. The predictive patterns for relation types in those communities might differ and potentially be contradictory, implying that \emph{not all} relation types are exchangeable. In our example, two racing teammates necessarily have the same team principal in the racing community, and therefore (Horner, \texttt{team\_principal\_of}, Pérez) can be predicted from ((Horner, \texttt{team\_principal\_of}, Verstappen), (Verstappen, \texttt{teammate\_of}, Pérez)). On the contrary, two gymnastic teammates might have different coaches and (Landi, \texttt{coach\_of}, Lee) should not be predicted when seeing ((Landi, \texttt{coach\_of}, Biles), (Biles, \texttt{teammate\_of}, Lee)). Due to this contradictory patterns, \texttt{team\_principal\_of} and \texttt{coach\_of} are not exchangeable. Similarly, the new test relation types might be exchangeable only with a subset of the training ones. In our example, \texttt{boss\_of} is exchangeable with \texttt{team\_principal\_of}, but not with \texttt{coach\_of}.

\begin{figure}[t]
    \centering
    \includegraphics[width=\textwidth]{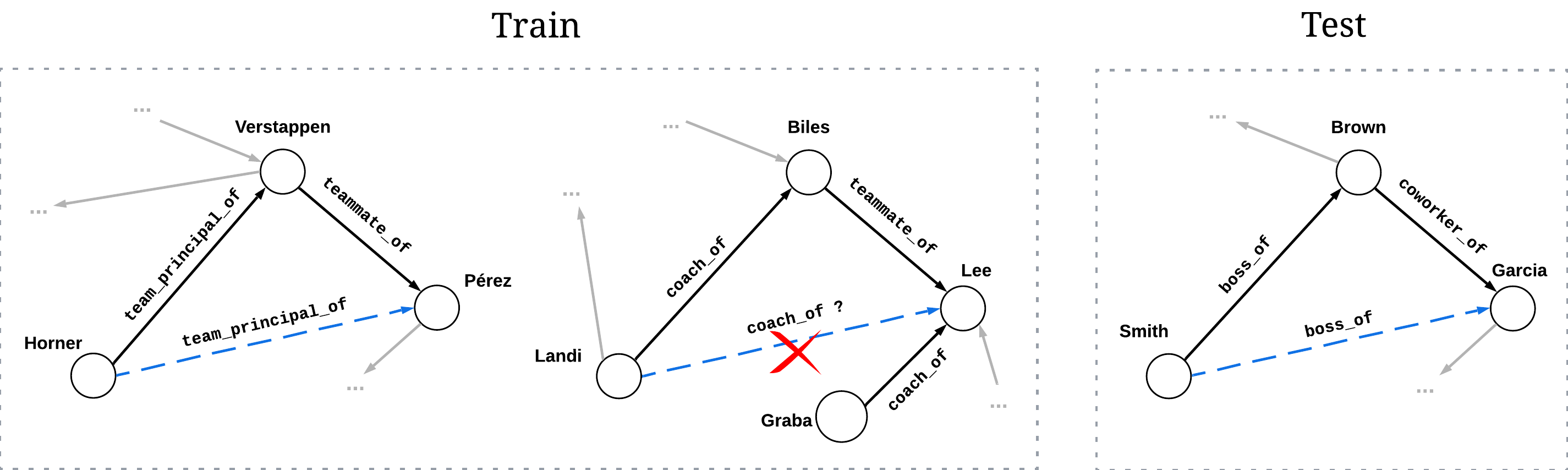}
    \caption{An example of a setting where not all relation types are exchangeable. Train graph contains relation types in the racing and the gymnastic communities. Test graph contains relation types in a business community. } 
    \label{fig:intro}
    \vspace{-10pt}
\end{figure}

\paragraph{Our approach.} Our work relaxes the double equivariance proposed by~\citet{kg-equivariance} by learning to partition the set of relations into distinct clusters, where each cluster exclusively contains relation types that are exchangeable among themselves. 
We demonstrate that these clusters of relation types can be understood as distinct tasks in a multi-task setting.
Consequently, our method learns multiple double equivariant graph models, one for each task (cluster).
At test time, we employ an adaptation procedure to assign new relation types to the most appropriate cluster, thus ensuring generalization to previously unseen relation types.

\paragraph{Main contributions.}
Our main contributions are as follows:
\begin{enumerate*}
    \item We develop a method capable of modeling the existence of distinct and contradictory predictive patterns among various sets of relation types by treating them as separate tasks in a multi-task setting;
    \item We propose a test-time adaptation procedure that learns task assignments for new relation types, enabling the application of our proposed method to entirely new test relation types;
    \item We create new benchmark datasets that fit the multi-task scenario we focus on;
    \item We develop a novel evaluation metric to more effectively measure the performance of existing methods in predicting missing triplets.
\end{enumerate*}

\section{Related work}
\paragraph{Link prediction in \namekgs.} Existing link prediction methods in \namekgs can be categorized into factorization-based approaches~\citep{nickel2011three,bordes2013translating,wang2014knowledge,yang2015embedding,nickel2016holographic,trouillon2016complex,lacroix2018canonical,dettmers2018convolutional,nguyen2018novel,sun2018rotate,Chen2021RelationPA} and GNN-based models~\citep{schlichtkrull2018modeling,vashishth2020somposition,galkin2020message,yu2020generalized,zhang2022rethinking}. Although the former exhibit remarkable performances, especially when combined with appropriate training strategies \citep{ruffinelli2020You,jain2020knowledge}, they are typically restricted to transductive settings. Conversely, GNN-based models can also be applied to inductive scenarios involving new nodes in test~\citep{teru2020inductive,zhu2021neural,ali2021improving,zhang2022knowledge,galkin2022nodepiece}. All these methods, however, cannot work when presented with new relation types in test, which instead represents the main interest of our paper.

\paragraph{Zero-shot and few-shot learning on new relation types (with side information).}
Recent works, aiming to predict missing triplets involving new relation types, consider the zero-shot or few-shot learning paradigm. To generalize to new relation types, zero-shot methods~\citep{qin2020generative,geng2021ontozsl,li2022hierarchical} use additional contextual information, such as the semantic descriptions of the relation types, making them unfit for our scenario. 
\revision{
In contrast, few-shot methods primarily adopt a meta-learning paradigm~\citep{xiong2018one,chen2019meta,sun2021one,zhang2020few,qian2022few} to discover similarities between the new relation types and the ones used during training, by learning from a limited number of support triplets. These methods, however, typically require the support triplets to be connected to the seen nodes and relation types. For instance, \citet{qian2022few} proposed to measure the similarities between the subgraphs around the target triplet involving the unseen relations with those involving the training relations, assuming the presence of a shared observable graph between training and test sets. 
This setting is more constrained than ours since we consider a completely new graph at test time involving no nodes and relations seen during training.
}

\paragraph{Double inductive link prediction (without side information).}
\revision{
Our work extends  ISDEA~\citep{kg-equivariance}, which introduces the double inductive link prediction task and the concept of double (permutation) equivariant graph models. These models are specifically designed to capture the double exchangeability assumption on the data, where, intuitively, nodes as well as relation types are interchangeable with each other. Importantly, these models are capable of generalizing to unseen relations, aligning to our task of interest. ISDEA also
demonstrated that InGram~\citep{lee2023ingram}, another recent approach for handling new relation types in test, generates positional embeddings that exhibit double equivariance in distribution, and introduced a corresponding method to enhance InGram's model, referred to as DEq-InGram.
In this paper we build upon ISDEA's model and address the shortcomings of double exchangeability. 
As we shall see next, approaches modeling the double-exchangeability assumption are unable} to properly model the difference in predictions between non-exchangeable relation types that have different (and potentially contradictory) predictive patterns. Due to space constraints, we refer the reader to \Cref{app:related-work} for detailed comparisons with prior work.

\section{Problem Definition} \label{sec:problem-def}

In this section we introduce the notation used through the remainder of this work. We consider an \namekg (multigraph) as a finite collection of typed relations between nodes. Formally, let $\cV$ be a finite discrete set of nodes and $\cR$ a finite discrete set of relation types. A triplet $(u, r, v)$ in the \namekg indicates that a node $u \in \cV$ is linked to another node $v \in \cV$ by means of a relation of type $r \in \cR$. Without loss of generality, we assume node and relation sets are numbered, that is $\cV \coloneqq \{ 1, 2, \dots, N \}$ and $\cR \coloneqq \{ 1, 2, \dots, R \}$, where $N \geq 2$ and $R \geq 2$. Consequently, 
we can represent an \namekg in tensor form as $\kgadj \in \kgspace$, with $\kgspace = \{ 0, 1 \}^{N \times R \times N}$, where $\kgadj_{u, r, v} = 1$ if and only if $(u, r, v)$ is a triplet in the \namekg.

Link prediction in \namekgs can be cast as a self-supervised learning problem~\citep[Appendix B]{cotta2023causal}, where an input graph $\kgadj$ is assumed to be the result of the application of an unknown mask $M \in   \{ 0, 1 \}^{N \times R \times N} $ on an unknown \namekg $\kgadj^{(\text{full})}$, i.e., $\kgadj = M \odot \kgadj^{(\text{full})}$ with $\odot$ the element-wise product, where the masking process hides the existence of certain triplets. The goal of a model is to predict the existence of the masked (or missing) triplets from $\kgadj$. That is, if $\overline{M}$ is the complement mask defined as $\overline{M} = {\bf 1} - M$, the model is asked to predict $P(\overline{M} \odot \kgadj^{(\text{full})} \mid \kgadj)$. 

\emph{In this work we focus on predicting missing triplets in new \namekgs with new relation types}. Given a training graph $\kgadj^{\text{(tr)}}$, with node set $\cV^{\text{(tr)}}$ and relation set $\cR^{\text{(tr)}}$, we aim to learn a model capable of accurately predicting missing triplets in a test graph $\kgadj^{\text{(te)}}$, with node set $\cV^{\text{(te)}}$ and relation set $\cR^{\text{(te)}}$, involving both new nodes and new relations types: $\cV^{\text{(tr)}} \not \supseteq \cV^{\text{(te)}}$ and $\cR^{\text{(tr)}}  \not \supseteq \cR^{\text{(te)}}$. To accurately predict missing test triplets without any extra information, such as contextual information (as in zero-shot methods~\citep{qin2020generative,geng2021ontozsl,li2022hierarchical}) or task labels (few-shot methods~\citep{xiong2018one,chen2019meta,sun2021one,zhang2020few,qian2022few}), $\kgadj^{\text{(te)}}$ must exhibit predictive patterns found in $\kgadj^{\text{(tr)}}$, which implies that missing test triplets can be predicted using the knowledge acquired from training, even if the relations convey entirely different meanings. 
\revision{\citet{kg-equivariance} recently proposed double equivariant models, capturing the concept of double exchangeability, that can solve our task of interest, namely the existence of new relation types at test time without extra information}. However, as we discuss next, the methods  in \citet{kg-equivariance} require that all relation types share a single predictive pattern.

\begin{figure}[t]
    \centering
    \includegraphics[width=1.0\textwidth]{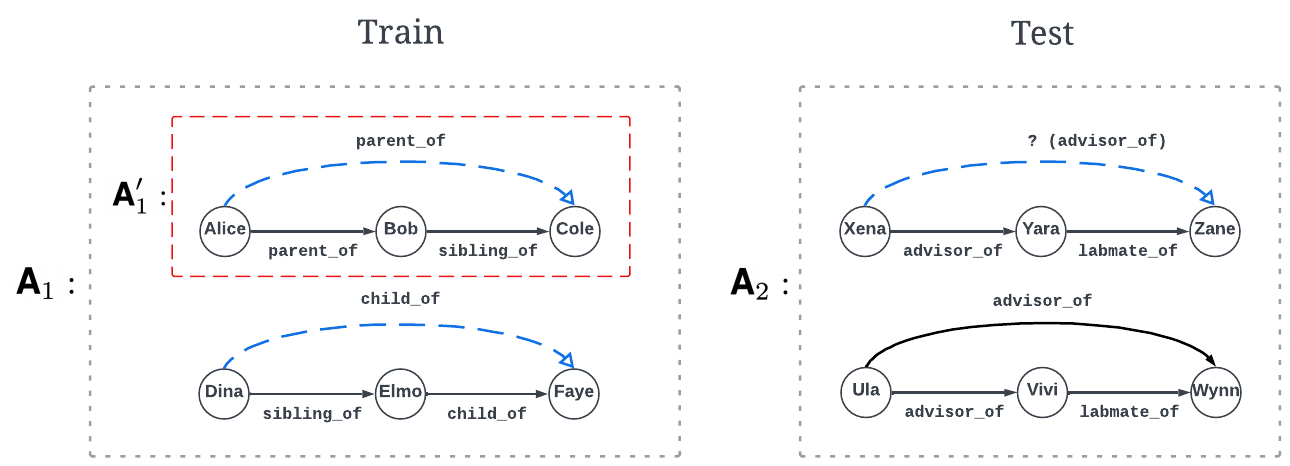}
    \caption{Example of our multi-task scenario.
    Solid arrows constitute the input (or observable) triplets, whereas the blue, dashed arrows represent the missing triplets to be predicted.
    The training \namekg exhibits conflicting predictive patterns for \texttt{parent\_of} and \texttt{child\_of}. The new test relation \texttt{advisor\_of} follows the same predictive pattern as \texttt{parent\_of}. } 
    \label{fig:multi-task}
    \vspace{-8pt}
\end{figure}

\paragraph{Existing gap: Learning to predict new relation types on graphs with conflicting predictive patterns.} To the best of our knowledge, no existing method \revision{aims at solving the problem of predicting} missing triplets involving new relation types without the need of extra information \emph{when relations exhibit different and potentially conflicting predictive patterns}. 
\Cref{fig:multi-task} illustrates an example of this setting, with solid arrows representing the input graph and dashed arrows denoting the missing triplets to be predicted. In this example, the training graph $\kgadj_1$ has family-tree relationships, while the test graph $\kgadj_2$ has relations in academia. Our goal is to learn predictive patterns from $\kgadj_1$ and generalize to predict the missing triplets in $\kgadj_2$. 
However, as demonstrated in the figure, the training graph $\kgadj_1$ contains conflicting predictive patterns.
The type of relation between Alice and Cole is the \emph{first} relation type in the 2-hop chain ((Alice, \texttt{parent\_of}, Bob), (Bob, \texttt{sibling\_of}, Cole)), which is \texttt{parent\_of}. Conversely, the type of relation between Dina and Faye is the \emph{second} relation type in the 2-hop chain ((Dina, \texttt{sibling\_of}, Elmo), (Elmo, \texttt{child\_of}, Faye)), which is \texttt{child\_of}.
Having conflicting predictive patterns in training does not prevent accurate prediction of test triplets, as long as we can correctly identify which pattern, among the learned ones, a test relation follows. In our example,
the true test relation type \texttt{advisor\_of} between Xena and Zane in $\kgadj_2$ shares the same predictive pattern as the relation \texttt{parent\_of} in $\kgadj_1$, as can be inferred from the observable triplet (Ula, \texttt{advisor\_of}, Wynn).
Since \revision{the pioneering work by \citet{kg-equivariance} focuses on a single double equivariant model, it learns a single model for all predictive patterns, which makes the model unable to distinguish the predictions of \texttt{parent\_of} and \texttt{child\_of}, and, consequently, between those of \texttt{advisor\_of} and \texttt{labmate\_of} in the example above}.

\section{A Multi-Task Perspective on Inductive Learning of New Relation Types}
In the previous section we noted that accurate prediction of missing triplets in a test graph with completely new relation types requires the test graph to exhibit the \revision{similar} predictive patterns as the training graph. Expanding upon this concept, we next define a predictive pattern through the notion of \revision{relation type} exchangeability and describe a task as a set of relation types sharing the same predictive patterns. This allows us to frame the problem of predicting with new relation types as \revision{entailing multiple tasks}, where \revision{we learn different predictive patterns for each task}.
Correctly performing on all tasks will then result in accurate predictions of test triplets having new relation types, as long as we can identify to which task they belong to.

\subsection{Re-imagining Inductive Learning as Relational Tasks}
We begin by defining the concept introduced in \citet{kg-equivariance} of exchangeability between relation types, which can informally be understood as the property of (certain) relation types to be interchangeable with each other. This property encapsulates our notion of shared predictive patterns, as exchangeable relation types necessarily follow the same patterns.

\begin{definition}[Exchangeability between relation types \citep{kg-equivariance}] \label{def:ex-relation}
  \revision{Let $\kgadj \in \kgspace$ be a random variable representing an \namekg}
  with node set $\cV = \{ 1, 2, \dots, N \}$ and relation set $\cR = \{ 1, 2, \dots, R \}$ and two relation types $r, r' \in \cR$, we say that $r$ and $r'$ are \textit{exchangeable} if there exists some node permutation $\pi \in \sS_N$ and relation type permutation $\sigma \in \sS_R$ such that the following two conditions are satisfied:
    \begin{align*}
        \sigma \circ r = r'  \qquad \text{and} \qquad
        P(\kgadj) = P(\sigma \circ \pi \circ \kgadj) ,
    \end{align*}
    where 
    \revisioncamera{$\circ$ denotes the permutation actions of $\pi$ on the nodes and $\sigma$ on the relation types in a graph.}
    That is, for all $u, v \in \cV$ and $r \in \cR$, the symmetric group $\sS_{N}$ and the symmetric group $\sS_{R}$ act on a graph via $(\pi \circ \kgadj)_{\pi \circ u, r, \pi \circ v} = \kgadj_{u, r, v}$  and $(\sigma \circ \kgadj)_{u, \sigma \circ r, v} = \kgadj_{u, r, v}$.
    We denote the exchangeability between relation types by $\sim_e$. 
\end{definition}

A similar formalization can be made for nodes \revision{(as commonly defined in the graph neural network literature~\citep{xu2018powerful,morris2019weisfeiler})}. Due to space limits, we omit this definition, but we consider nodes to be exchangeable throughout the paper.

In the following, we introduce our theoretical contributions. We start by proving that the exchangeability property between relation types can be regarded as a higher-order relation between the elements in $\cR$, or, more precisely, as an equivalence relation on $\cR$. 

\begin{restatable}{lemmma}{equivalence}
\label{lemma:equivalence}
    The exchangeability between relation types $\sim_e$ defines an \textbf{equivalence relation} on $\cR$, since it satisfies the reflexivity, symmetry, and transitivity properties.
\end{restatable}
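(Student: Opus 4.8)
The plan is to verify the three defining properties of an equivalence relation — reflexivity, symmetry, and transitivity — directly from \Cref{def:ex-relation}, in each case exhibiting the required node permutation $\pi \in \sS_N$ and relation type permutation $\sigma \in \sS_R$.

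For \textbf{reflexivity}, given any $r \in \cR$ I would take $\pi = \mathrm{id}_{\sS_N}$ and $\sigma = \mathrm{id}_{\sS_R}$. Then $\sigma \circ r = r$ and $P(\kgadj) = P(\sigma \circ \pi \circ \kgadj)$ holds trivially since the identity permutations act as the identity on $\kgadj$, so $r \sim_e r$.

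For \textbf{symmetry}, suppose $r \sim_e r'$ witnessed by $\pi \in \sS_N$ and $\sigma \in \sS_R$ with $\sigma \circ r = r'$ and $P(\kgadj) = P(\sigma \circ \pi \circ \kgadj)$. I would use the inverse permutations $\pi^{-1}$ and $\sigma^{-1}$: from $\sigma \circ r = r'$ we get $\sigma^{-1} \circ r' = r$, and applying $\sigma^{-1} \circ \pi^{-1}$ to both sides of the distributional equality (a permutation of a random variable preserves equality in distribution, and the actions of $\sS_N$ and $\sS_R$ commute, being indexed by disjoint tensor modes) gives $P(\sigma^{-1} \circ \pi^{-1} \circ \kgadj) = P(\kgadj)$. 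Hence $r' \sim_e r$.

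For \textbf{transitivity}, suppose $r \sim_e r'$ via $(\pi_1, \sigma_1)$ and $r' \sim_e r''$ via $(\pi_2, \sigma_2)$, so $\sigma_1 \circ r = r'$, $\sigma_2 \circ r' = r''$, $P(\kgadj) = P(\sigma_1 \circ \pi_1 \circ \kgadj)$, and $P(\kgadj) = P(\sigma_2 \circ \pi_2 \circ \kgadj)$. I would take the composites $\pi = \pi_2 \circ \pi_1$ and $\sigma = \sigma_2 \circ \sigma_1$. Then $\sigma \circ r = \sigma_2 \circ (\sigma_1 \circ r) = \sigma_2 \circ r' = r''$, and chaining the two distributional equalities — substituting $\sigma_1 \circ \pi_1 \circ \kgadj$ for $\kgadj$ in the second, and again using that the node and relation actions commute so that $(\sigma_2 \circ \pi_2) \circ (\sigma_1 \circ \pi_1) = (\sigma_2 \circ \sigma_1) \circ (\pi_2 \circ \pi_1)$ — yields $P(\kgadj) = P(\sigma \circ \pi \circ \kgadj)$. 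Hence $r \sim_e r''$.

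The only subtlety — and the one point I would be careful to state explicitly rather than treat as routine — is that the group actions of $\sS_N$ and $\sS_R$ on $\kgadj$ commute (they act on different tensor modes), so compositions of "combined" permutations $\sigma \circ \pi$ behave well, and that equality in distribution is preserved under any fixed permutation applied to both sides. Neither is hard, but getting the bookkeeping of indices right in the symmetry and transitivity steps is where a careless argument could slip. There is no deep obstacle here; the lemma is essentially the observation that the set of pairs $(\pi, \sigma)$ preserving $P(\kgadj)$ forms a group (a stabilizer subgroup of $\sS_N \times \sS_R$), and $\sim_e$ is the orbit-equivalence on $\cR$ induced by the $\sS_R$-component of that group.
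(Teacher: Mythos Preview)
Your proposal is correct and follows essentially the same approach as the paper's proof: identity permutations for reflexivity, inverses for symmetry, and compositions for transitivity, with the commutativity of the $\sS_N$ and $\sS_R$ actions noted explicitly. The paper's symmetry argument additionally spells out the bijectivity-based substitution showing the distributional equality holds for \emph{all} graphs, but your ``applying the inverse permutation to both sides'' reasoning amounts to the same thing.
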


The importance of \Cref{lemma:equivalence} is in that it allows us to partition the set of relations $\cR$ into disjoint equivalence classes. Each equivalence class contains relation types that are exchangeable with each other, whereas relation types that are not exchangeable belong to different equivalence classes. Consequently, these partitions of $\cR$ can naturally be considered as different tasks, dubbed \revision{{\em relational tasks} henceforth}, each containing relation types that share the same predictive patterns.

\begin{definition}[Relational tasks]\label{def:tasks}
    We define a \textit{relational task} with respect to a relation type $r \in \cR$ of an \namekg $\kgadj$ with node set $\cV$ and relation set $\cR$ to be the equivalence class $[r]$ under $\sim_e$, i.e.,
    \begin{align}\label{eq:tasks}
        \cT_r \coloneqq [r] = \{ r' \in \cR : r' \sim_e r \}.
    \end{align}
\end{definition}

Viewing $\cR $ as partitioned into a hidden set of disjoint relational tasks allows us to consider the problem from a multi-task perspective. Our goal then becomes finding a model able to accurately learn all tasks, specializing on the patterns unique to each task, which are potentially conflicting among each other. Such model will then be asked to recognize which task a test relation type belongs to, in order to predict missing test triplets by applying what was learned in train for the same task.

\subsection{Handling Conflicting Patterns for Different Relation Types as a Multi-task Scenario}
\Cref{def:ex-relation,def:tasks} allow us to formalize in the following definition the \namekgs of our interest, such as the one introduced in \Cref{sec:problem-def}, which we dub \namemtdekgs.
We adopt the term \emph{double exchangeable} from \citet{kg-equivariance}, since it inherently captures the idea of exchangeability both between node ids and between relation types -- a shared concept between our work and theirs -- but we extend it to our multi-task scenario.

\begin{definition}[\nameMtdekg] \label{def:double-ex}
    Given an \namekg $\kgadj$, with node set $\cV$ and relation set $\cR $, $\kgadj$ is said to be a \textit{\namemtdekg} if it has more than one relational tasks, i.e.,
    \begin{align*}
        |\{ \cT_r : r \in \cR \}| > 1.
    \end{align*}
    Equivalently, $\kgadj$ is a \namemtdekg if there exist two relation types $r, r' \in \cR$ such that $r \not \sim_e r'$.
\end{definition}

\revision{
In words, the training graph is a \namemtdekg, in which each task comprises exchangeable relation types that follow the same predictive patterns, while distinct tasks may have conflicting patterns. Furthermore, our test graph is a \namemtdekg, with tasks that constitute a subset of the training ones. 
}

\eat{
\subsection{The Special Case of Single Task Double-Exchangeable \nameKGs}
\Cref{def:double-ex} can be specialized into what we refer to as single-task \namedekgs if all relations belong to the same equivalence class (\Cref{lemma:equivalence}).
An example of this scenario happens when considering only the boxed subgraph $\kgadj_1'$ of the training graph in \Cref{fig:multi-task}. If we restrict our training graph to $\kgadj_1'$ and maintain $\kgadj_2$ as the test graph, then the true test relation type \texttt{advisor\_of} between Xena and Zane in $\kgadj_2$ can accurately be predicted by the model from \citet{kg-equivariance}. This is because it follows the \emph{only} predictive pattern present in the data, which is the one of the relation \texttt{parent\_of}. Nonetheless, as emphasized throughout our work, the single-task configuration represents a particular case of the more general multi-task setting, which accommodates a greater variety of \namekgs, such as the complete training graph $\kgadj_1$ in \Cref{fig:multi-task}.
}

\Cref{def:double-ex} can be specialized to the case of a single relational task (\Cref{def:tasks}), where all relations belong to the same equivalence class (\Cref{lemma:equivalence}).
An example of this scenario happens when considering only the boxed subgraph $\kgadj_1'$ of the training graph in \Cref{fig:multi-task}. If we restrict our training graph to $\kgadj_1'$ and maintain $\kgadj_2$ as the test graph, then the true test relation type \texttt{advisor\_of} between Xena and Zane in $\kgadj_2$ can accurately be predicted by the model from \citet{kg-equivariance}. This is because it follows the \emph{only} predictive pattern present in the data, which is the one of the relation \texttt{parent\_of}. Nonetheless, 
the single-task configuration represents a particular case of the more general multi-task setting, which 
\revision{accommodates the existence of potentially conflicting predictive patterns of different relations, such as those in} the complete training graph $\kgadj_1$ in \Cref{fig:multi-task}.

\section{Proposed Method} \label{sec:method}
In this section we introduce our framework to learn the different predictive patterns which are specific for each task and a procedure to generalize to new test relation types. Our proposed architecture models exchangeability between relation types belonging to the same task while differentiating them from the learned patterns of other tasks. To adapt to the unseen relations in the test \namekg, we propose a test-time adaptation procedure to identify the tasks to which test relations belong.

\subsection{Multi-Task Double-Equivariant Linear Layer} \label{sec:multi-task-de}

Suppose we knew that the ground-truth relational tasks $\{ \cT_r: r \in \cR \}$ in \Cref{eq:tasks} given an \namekg $\kgadj$ with node set $\cV$ and relation set $\cR$. \footnote{In \Cref{sec:attn-weights} we will remove this assumption and show how to learn task memberships.}
Without loss of generality, consider an arbitrary ordering of the relational tasks and denote the ordered relational tasks
as $\cT^{(1)}, \cT^{(2)}, \dots, \cT^{(K)}$, where 
\revisioncamera{$\cT^{(k)}$ is the $k$-th ordered task of a total number of $K$ tasks.}
We denote by $i: \cR \to \{ 1, 2, \dots, K \}$ a task index mapping, such that $\cT^{(i(r))} = \cT_r$. 
Inspired by the equivariant framework proposed by \citet{maron2020learning,bevilacqua2022equivariant} we present the following \textbf{\nameLAYER (\namelayerabbr)}, which updates representations at every layer $t$ as
\begin{align}  \label{eq:cond-eqvlin} 
        \hrepr^{(t+1)}_{\cdot, r, \cdot, \cdot} 
        &= L_1^{(t)} \big( \hrepr^{(t)}_{\cdot, r, \cdot, \cdot} \big)
        + L_2^{(t)} \big( \mathbf{1} \, \otimes \, p_{i(r)} + \sum_{r' \in \cT^{(i(r))} \setminus \{r\}} \hrepr^{(t)}_{\cdot, r', \cdot, \cdot} \big) \nonumber\\
        &+ \sum_{\substack{k = 1, \dots, K \\ k \neq i(r)}} L_3^{(t)} \big( \mathbf{1} \, \otimes \, p_{k} + \sum_{r'' \in \cT^{(k)}} \hrepr^{(t)}_{\cdot, r'', \cdot, \cdot} \big),
\end{align}
where $\hrepr^{(t)} \in \sR^{N \times R \times N \times d}$ is the layer input with $\hrepr^{(0)} = \kgadj$, 
\revisioncamera{$\hrepr^{(t)}_{\cdot, r, \cdot, \cdot}$ denotes the graph representations specific to relation type $r$,}
and $L_1^{(t)}, L_2^{(t)}, L_3^{(t)}: \sR^{N \times N \times d} \to \sR^{N \times N \times d'}$ are GNN layers that output \textit{pairwise} representations with $N$ the number of nodes, $R$ the number of relations and $d, d'$ appropriate dimensions. The vectors $p_k \in \sR^d$, $k = 1, 2, \dots, K$ are learnable positional embeddings, each specific to the task $\cT^{(k)}$, which are repeated on the last dimension through the Kronecker product with the matrix of all ones, $\mathbf{1} \in \{1\}^{N \times N \times 1}$. The sums in $\sum_{r' \in \cT^{(i(r))} \setminus \{r\}} \hrepr^{(t)}_{\cdot, r', \cdot, \cdot}$ and $\sum_{r'' \in \cT^{(k)}} \hrepr^{(t)}_{\cdot, r'', \cdot, \cdot}$ can be replaced by any other set aggregations.

Note that if the total number of relational tasks $K$ is 1, then \Cref{eq:cond-eqvlin} recovers the single-task double-equivariant layer proposed in \citet{kg-equivariance}.
Indeed, if $K=1$, then $i(r) = 1$ for any $r \in \cR$ with $\cT^{(1)} = \cR$, and \Cref{eq:cond-eqvlin} can be rewritten as 
\begin{equation} \label{eq:eqvlin}
    \hrepr^{(t+1)}_{\cdot, r, \cdot, \cdot}  = L^{(t)}_1(\hrepr^{(t)}_{\cdot, r, \cdot, \cdot}) + L^{(t)}_2\Bigl( \sum_{r' \in \cR \setminus \{r\}} \hrepr^{(t)}_{\cdot, r', \cdot, \cdot}  \Bigr),
\end{equation}
where the term $\mathbf{1} \, \otimes \, p_{i(r)}$ was absorbed into $L^{(t)}_2$.

\paragraph{The role of positional embeddings.} \Cref{eq:cond-eqvlin} uses the positional embedding vectors $p_{j} \in \sR^{d}$, $j \in \{1, \ldots, K\}$, to allow representations of relation types belonging to different tasks to be different, even when they have isomorphic observable graphs. In order to understand the role of $p_{j}$ in our architecture, we refer once again to \Cref{fig:multi-task}. Without the inclusion of the positional embeddings, \Cref{eq:cond-eqvlin} would give the same representation to the missing triplet involving $\texttt{parent\_of}$ and the missing triplet involving $\texttt{child\_of}$, even if those relations belong to two different relational tasks, because the inputs to $L^{(t)}_1,L^{(t)}_2,L^{(t)}_3$ are the same, starting from $t=0$.

\subsection{Learning Soft Task Membership via Attention Weights} \label{sec:attn-weights}

The previous section assumes we know the ground-truth assignment of relation types to tasks. In what follows, we learn such assignments from data only. 
Intuitively, we need to partition all relations $\cR$ into disjoint equivalence classes, where each partition corresponds to a unique relational task. 
This process is a discrete optimization problem, which we relax into a continuous one by means of an 
\revisioncamera{learnable}
attention matrix $\alpha \in [0, 1]^{R \times \hat{K}}$, where $\hat{K}$ is 
\revisioncamera{a hyperparameter controlling}
the maximum number of partitions we allow our architecture to model 
\revisioncamera{(which is potentially different from $K$, the ground-truth number of relational tasks unknown to us).}
The individual attention value $\alpha_{r, k}$ denotes the degree (or probability) that the relation $r \in \cR$ belongs to the $k$-th equivalence class, with the constraint that $\sum_{k=1}^\revision{\hat{K}} \alpha_{r, k} = 1$ for all $r \in \cR$. Hence, the \namelayerabbr of \Cref{eq:cond-eqvlin} can be relaxed into what we called the \textbf{soft \namelayerabbr}:
\begin{align}  \label{eq:asymcond-eqvlin}
    \hrepr^{(t+1)}_{\cdot, r, \cdot, \cdot} 
    &= L_1^{(t)} \big( \hrepr^{(t)}_{\cdot, r, \cdot, \cdot} \big)
    + L_2^{(t)} \big( \mathbf{1} \, \otimes \, p_{\hat{i}(r)} + \sum_{r' \in \cR \setminus \{r\}} \alpha_{r', \hat{i}(r)} \hrepr^{(t)}_{\cdot, r', \cdot, \cdot} \big) \nonumber\\
    &+ \sum_{\substack{k = 1, \dots, \revision{\hat{K}} \\ k \neq \hat{i}(r)}} L_3^{(t)} \big( \mathbf{1} \, \otimes \, p_{k} + \sum_{r'' \in \cR \setminus \{ r \}} \alpha_{r'', k} \hrepr^{(t)}_{\cdot, r'', \cdot, \cdot} \big),
\end{align}
where $\hat{i}(r) = \arg\max_{k= 1 \dots \revision{\hat{K}}} \alpha_{r, k}$, which ideally should give the correct id $i(r)$ of the ground-truth relational task $\cT_r$ that the relation $r$ belongs to.
The final architecture, which we name the {\bf \nameMODEL (MTDEA)}, is obtained by stacking $T$ soft \namelayersabbr to produce a graph representation $\Gamma(\kgadj) \in \sR^{N \times R \times N \times d}$ for a given \namekg $\kgadj$:
\begin{align*}
    \Gamma (\kgadj) \coloneqq L^{(T)} \big( f \big( \cdots f\big( L^{(1)}(\kgadj) \big) \cdots \big) \big),
\end{align*}
where $f$ is a non-polynomial activation such as ReLU. The predictions of individual triplets can then be obtained through a triplet score function $\Gamma_{\text{tri}}: \cV \times \cR \times \cV \times \kgspace \to [0, 1]$ followed by a sigmoid activation function, i.e., 
$
    \Gamma_{\text{tri}}((u, r, v), \kgadj) \coloneqq \sigma(\Gamma (\kgadj)_{u, r, v, \cdot}) .
$

\subsection{Dual-Sampling Loss with Task Membership Regularization}  \label{sec:loss}
Existing literature that tackles link prediction in \namekgs relies on a loss as based on entity-centric negative sampling \cite{yang2015embedding,schlichtkrull2018modeling,zhu2021neural}, where for each ground-truth (existing) triplet $(u, r, v)$, the tail node $v$ of $(u, r, v)$ is randomly corrupted to obtain a fixed number of negative samples $(u, r, v')$. Such entity-based negative sampling is insufficient for our loss because correctly predicting the \emph{relation type} between two nodes is equally important as correctly predicting the \emph{tail node} given the head node and relation type. To this end, we propose the \textit{dual-sampling task loss} $\cL_{\text{dual}}$, which given the training \namekg $\kgadj^{\text{(tr)}}$ with node set $ \cV^{\text{(tr)}}$ and relation set $ \cR^{\text{(tr)}}$, makes use of $n$ negative samples obtained by corrupting tail nodes and $m$ negative samples obtained by corrupting the relation types from positive samples, that is 
\begin{align}  \label{eq:task-loss}
    \cL_{\text{dual}} \coloneqq - \sum_{(u, r, v) \in \cS} \bigg( 
    &\log(\Gamma_{\text{tri}}((u, r, v), \kgadj^{\text{(tr)}})) 
    \revisioncamera{+} \frac{1}{n} \sum_{i=1}^n \log(1 - \Gamma_{\text{tri}}((u, r, v'_i), \kgadj^{\text{(tr)}}))) \nonumber \\
    &\revisioncamera{+} \frac{1}{m} \sum_{j=1}^m \log(1 - \Gamma_{\text{tri}}((u, r'_j, v), \kgadj^{\text{(tr)}})))
    \bigg) ,
\end{align}
where $\cS \coloneqq \{ (u, r, v) \in \cV^{\text{(tr)}} \times \cR^{\text{(tr)}} \times \cV^{\text{(tr)}} \mid \kgadj^{\text{(tr)}}_{u, r, v} = 1 \}$ is the set of positive triplets, $(u, r, v'_i)$ is the $i$-th entity-based negative sample and $(u, r'_j, v)$ the $j$-th relation-based negative sample corresponding to the positive triplet $(u, r, v)$.

\Cref{eq:task-loss} constitutes only a term of the loss function we optimize, which further contains regularization terms on the attention matrix $\alpha$.
Intuitively, we want the individual attention values to be either 0 or 1, because each value should represent whether a relation type belongs to certain task (value 1) or not (value 0). Moreover, we aim to have a large concentration of the attention values, in order to have as few partitions as possible. Hence, we propose the following model loss, where $\lambda_1, \lambda_2 \in \mathbb{R}$ are hyper-parameters weighting the terms:
\begin{align} \label{eq:full-loss}
    \mathcal{L} = \mathcal{L}_{\text{dual}} + \lambda_1 \underbrace{\sum_{r \in \cR^{\text{(tr)}}} \!\!\! \Big( -\sum_{j= 1 \dots \revision{\hat{K}}} \alpha_{r, j} \log \alpha_{r, j}\Big)}_{\mathcal{L}_\text{1-hot}} + \lambda_2 \underbrace{\Big(- \sum_{j= 1 \dots \revision{\hat{K}}} \text{LGamma} \Big(1 + \sum_{r \in \cR^{\text{(tr)}}} \alpha_{r,j} \Big)\Big)}_{\mathcal{L}_\text{conc}}. 
\end{align}
The first term $\mathcal{L}_{\text{dual}}$ is the dual-sampling loss in \Cref{eq:task-loss}. 
\revisioncamera{The second term $\mathcal{L}_\text{1-hot}$ \textit{minimizes} the entropy of $\alpha_{r, \cdot}$, i.e. the relation type $r$'s partition membership probabilities, for each relation type $r$. This effectively pushes $\alpha_{r, \cdot}$ towards a one-hot vector, encouraging individual attention values to be close to either 0 or 1.}
Finally, the third term $\mathcal{L}_\text{conc}$ takes advantage of the log-gamma function to encourage the relation set to be split in as few partitions as possible.

\subsection{A Test-Time Adaptation Procedure} \label{sec:transductive}
The attention matrix learned during training encodes task membership of relation types in training, and therefore it cannot be directly ported to the new test relation types in our test graph $\kgadj^{\text{(te)}}$ with $N^{\text{(te)}}$ nodes and $R^{\text{(te)}}$ relation types. We address this issue by adopting a test-time adaptation procedure where we optimize a
\revisioncamera{new}
test-time attention matrix $\alpha^{(\text{te})} \in [0, 1]^{R^{\text{(te)}} \times \hat{K}}$, 
\revisioncamera{$\alpha^{(\text{te})} \neq \alpha^{(\text{tr})}$,}
while freezing all other parameters of the architecture. During the adaptation, only the observable triplets of the test graph $\kgadj^{\text{(te)}}$ are used for training $\alpha^{(\text{te})}$. That is, we follow the standard self-supervised procedure for link prediction (as in \Cref{sec:problem-def} and \citet[Appendix B]{cotta2023causal}), and create a self-supervised mask $M \in \{ 0, 1 \}^{N^{\text{(te)}} \times R^{\text{(te)}} \times N^{\text{(te)}}}$ that tunes $\alpha^{(\text{te})}$ to maximize $P(\overline{M} \odot \kgadj^{(\text{te})} \mid M \odot \kgadj^{(\text{te})})$, with $\overline{M} = {\bf 1} - M$.

\section{Experiments} \label{sec:experiments}
In this section, we empirically evaluate our model in predicting missing triplets involving new relation types. \revision{Due to space constraints, we present our main results, and defer readers to \Cref{app:dataset,app:exp-details,app:exp}.}
We set to address the following main questions: 
\begin{enumerate*}[label=\textbf{Q\arabic*},leftmargin=*]
    \item \revision{\textit{Does our model outperform the baselines on a synthetic dataset constructed to contain multiple tasks?}}
    \item \revision{\textit{How does our model compare to the baselines on real-world datasets that likely contain multiple tasks?}}
\end{enumerate*}

\textbf{Baselines.}
We evaluate our model against four baselines: InGram~\citep{lee2023ingram}, ISDEA~\citep{kg-equivariance}, the homogeneous version of NBFNet \citep{zhu2021neural} (NBFNet-homo), and the homogeneous version of ISDEA (ISDEA-homo). 
The homogeneous models are obtained by modifying the corresponding base models to treat all relation types equally. As a result, when predicting a tail node $v$ given a head node $u$ and a relation type $r$, a homogeneous model returns the node $v$ for which the edge $(u, v)$ is most likely to exist, regardless of the relation type. When predicting the relation type $r$ between given nodes $u$ and $v$, a homogeneous model returns a uniform prediction over all possible relation types. This modification allows NBFNet to generalize to new test relation types, a task it cannot perform otherwise. To the best of our knowledge, these models are the only ones applicable to our scenario.

\begin{wraptable}[17]{r}{0.55\textwidth}
    \vspace{-5pt}
    \footnotesize
    \centering
    \caption{Dual-sampling metrics on \textsc{Metafam}. We report mean and std over 3 seeds, with best values in bold, second-best underlined. $\hat{K}$ denotes the maximum number of tasks the architecture can model. 
            \textbf{Our model MTDEA with $\hat{K} = 2$ is comparable to the baselines on Hits@10 and outperforms them on Hits@1.}}\label{tab:metafam-result}
    \resizebox{.9\linewidth}{!}{
    \begin{tabular}{lrr}
        \toprule
        Models & {Hits@1 $\uparrow$} & {Hits@10 $\uparrow$} \\
        \midrule
        NBFNet-homo                    & 0.068 (0.001) & 0.400 (0.001)  \\
        ISDEA-homo                     & 0.000 (0.000) & 0.000 (0.000)  \\
        ISDEA~\citep{kg-equivariance}  & \underline{0.292} (0.029) & 0.609 (0.050) \\
        InGram~\citep{lee2023ingram}   & 0.222 (0.029) & \textbf{0.719} (0.135) \\
        \midrule
        MTDEA ($\hat{K}=2$)            & \textbf{0.344} (0.067) & \underline{0.704} (0.072)  \\
        MTDEA ($\hat{K}=4$)            & 0.172 (0.134) & 0.358 (0.199)  \\
        MTDEA ($\hat{K}=6$)            & 0.169 (0.057) & 0.520 (0.117)  \\
        \bottomrule
    \end{tabular}
    }
\end{wraptable}

\textbf{Dual-sampling metrics.} \label{par:dual-sampling-metrics} 
In line with the dual-sampling loss we proposed in \Cref{eq:task-loss}, we present the \textit{dual-sampling metrics}, which include Hits@$k$, $k \in \{1,10\}$ (Mean Rank (MR), and Mean Reciprocal Rank (MRR) in the Appendix).
For each positive triplet we generate 24 negative samples by corrupting the tail entity and 26 negative samples by corrupting the relation type. These metrics are better suited for measuring the capabilities of the models in our tasks (\Cref{app:metrics}).

\textbf{\textbf{A1}: Synthetic multi-task dataset.}
To address \textbf{Q1},
we construct a synthetic dataset \textsc{MetaFam} that explicitly exhibits conflicting predictive patterns, or multiple tasks, in the \namekgs. In particular, we recreate the conflicting predictive patterns shown in \Cref{fig:multi-task} where we use family relationships in train, and academic relationships in test (see \Cref{app:synthetic-dataset} for details). \Cref{tab:metafam-result} shows the results under the dual-sampling metrics. As we can see from the table, our MTDEA model with two task partitions $\hat{K}=2$ obtains the best performance under 
Hits@1 and achieves comparable performance to the best-performing baseline InGram on Hits@10. In addition, on both Hits@1 and Hits@10, our model surpasses ISDEA, the baseline model that our MTDEA is built upon.
This observation conforms to our expectation as \textsc{MetaFam} was constructed to include exactly two conflicting predictive patterns and therefore a model capable of modeling two distinct tasks is expected to obtain the superior predictions in this dataset. We note that our model falls short of InGram on Hits@10, but this is likely due to the poor performance of ISDEA on this metric. 

\begin{table}
    \centering
    \caption{Dual-sampling metrics on \textsc{WikiTopics-MT1} and \textsc{WikiTopics-MT2}, tested on four topics (\textsc{Health} and \textsc{Taxonomy} for \textsc{WikiTopics-MT1}, \textsc{Location} and \textsc{Science} for \textsc{WikiTopics-MT2}) not seen in training. We report mean and std over 3 seeds, with best values in bold, second-best underlined. $\hat{K}$ denotes the maximum number of tasks the architecture can model. \textbf{Our models consistently outperform the baselines with significantly smaller standard deviations on Hits@1.}} \label{tab:wikitopics-result}
    \vspace{5pt}
    \resizebox{\linewidth}{!}{
    \begin{tabular}{lrrrrrrrrrr}
        \toprule
         & \multicolumn{2}{c}{\textsc{MT1-Health}} & \multicolumn{2}{c}{\textsc{MT1-Taxonomy}} & \multicolumn{2}{c}{\textsc{MT2-Location}} & \multicolumn{2}{c}{\textsc{MT2-Science}} \\
        \cmidrule(lr){2-3} \cmidrule(lr){4-5} \cmidrule(lr){6-7} \cmidrule(lr){8-9} 
        Models & {Hits@1 $\uparrow$} & {Hits@10 $\uparrow$} & {Hits@1 $\uparrow$} & {Hits@10 $\uparrow$} & {Hits@1 $\uparrow$} & {Hits@10 $\uparrow$} & {Hits@1 $\uparrow$} & {Hits@10 $\uparrow$} \\
        \midrule
         NBFNet-homo                   & 0.041 (0.000) & 0.339 (0.003) & 0.034 (0.000) & 0.315 (0.001) & 0.035 (0.002) & 0.292 (0.016) & 0.024 (0.001) & 0.235 (0.008)  \\
        ISDEA-homo                     & 0.000 (0.000) & 0.000 (0.000) & 0.000 (0.000) & 0.000 (0.000) & 0.000 (0.000) & 0.000 (0.000) & 0.000 (0.000) & 0.000 (0.000)  \\
        ISDEA~\citep{kg-equivariance}  & 0.323 (0.140) & 0.481 (0.138) & 0.269 (0.063) & 0.365 (0.082) & 0.393 (0.038) & 0.569 (0.005) & 0.390 (0.034) & 0.617 (0.023)   \\
        InGram~\citep{lee2023ingram}   & 0.122 (0.037) & \textbf{0.869} (0.117) & 0.198 (0.100) & \textbf{0.723} (0.220) & 0.091 (0.043) & \textbf{0.780} (0.049) & 0.063 (0.053) & \textbf{0.691} (0.052)  \\
        \midrule
        MTDEA ($\hat{K}=2$)            & 0.358 (0.191) & 0.513 (0.112) & \underline{0.330} (0.157) & 0.457 (0.121) & \underline{0.417} (0.023) & 0.557 (0.014) & \underline{0.406} (0.008) & \underline{0.595} (0.027)  \\
        MTDEA ($\hat{K}=4$)            & \underline{0.390} (0.127) & 0.496 (0.108) & 0.307 (0.203) & 0.417 (0.175)  & 0.405 (0.049) & 0.547 (0.037) & \textbf{0.409} (0.004) & 0.590 (0.010) \\
        MTDEA ($\hat{K}=6$)           & \textbf{0.457} (0.012) & \underline{0.555} (0.010) & \textbf{0.422} (0.010) & \underline{0.504} (0.025) & \textbf{0.431} (0.004) & \underline{0.558} (0.014) & 0.405 (0.012) & 0.580 (0.024)   \\
        \bottomrule
    \end{tabular}
    }
    \vspace{-12pt}
\end{table}

\textbf{\textbf{A2}: Real-world multi-task datasets.} \label{par:wikitopics-mt-results}
To address \textbf{Q2}, we create two novel multi-task scenarios, named \textsc{WikiTopics-MT1} and \textsc{WikiTopics-MT2}, in the \textsc{WikiTopics} dataset introduced by \citet{kg-equivariance} and obtained from the \textsc{WikiData5M} \cite{wang2021kepler} by grouping the relation types into different topics, such as Art, Education, and Sports.
The \textsc{WikiTopics} dataset was employed by \citet{kg-equivariance} to assess the extrapolation performance of the double-equivariant model, ISDEA, when trained on one topic and tested on another one. For our multi-task datasets \textsc{WikiTopics-MT1} and \textsc{WikiTopics-MT2}, we select for training pairs of topics where, as shown in \citet{kg-equivariance}, ISDEA exhibits the lowest transfer-topic performance (e.g., the \textsc{Art} and \textsc{People}, which we use in \textsc{WikiTopics-MT1}) and test on a third topic (\textsc{Health} or \textsc{Taxonomy}, used in \textsc{WikiTopics-MT1}) on which the ISDEA trained on one training topic (e.g.\ \textsc{Art}) performs good but the ISDEA trained on the other training topic (e.g.\ \textsc{People}) performs poorly. Since the transfer-topic performance of ISDEA indicates the degree of double-exchangeability between graphs in the two topics, which is related to the definition of tasks (\Cref{def:tasks}), this strategy likely produces train and test graphs with multiple tasks.

\Cref{tab:wikitopics-result} presents the performance on \textsc{WikiTopics-MT1} when trained on relations from both \textsc{Art} and \textsc{People} topics and tested on either \textsc{Health} or \textsc{Taxonomy} topic, and on the \textsc{WikiTopics-MT2} when trained on relations from both \textsc{Sport} and \textsc{Health} topics and tested on either \textsc{Location} or \textsc{Science} topic.
Our model MTDEA, and in particular the one with $\hat{K} = 6$, 
\revision{
outperforms ISDEA on both test scenarios across all metrics and surpasses InGram on the Hits@1 metric, while having significantly smaller standard deviations on both Hits@1 and Hits@10 metrics. 
These results suggest that the \textsc{WikiTopics-MT} datasets indeed possess a complicated multi-task structure, and our model, which has the best multi-task modeling capabilities, exhibit more consistent performances.
We note that MTDEA is outperformed by InGram on Hits@10 metric. However, this can be likely attributed to the comparatively low performance of ISDEA, the model on which MTDEA is built, on this metric.}

\vspace{-10pt}
\section{Conclusions}
In this work we studied the problem of extrapolating to new relation types in link prediction tasks in discrete attributed multigraphs. To overcome the challenge faced by existing work when the graphs contain relation types exhibiting contradictory predictive patterns, we proposed a relaxation of the double equivariance models of \citet{kg-equivariance} and demonstrated that this relaxation can be interpreted within a multi-task framework. We designed an architecture capable of modeling this multi-task double equivariance, along with a test-time adaptation procedure to learn task assignments for new relation types. To empirically evaluate our method, we introduced new benchmark datasets featuring multi-task structures and presented novel evaluation metrics to measure its benefits.

\subsubsection*{Acknowledgments}
The authors would like to thank Jianfei Gao and Yangze Zhou for insightful discussions. This work was supported in part by the National Science Foundation (NSF) awards CAREER IIS-1943364, CCF-1918483, and CNS-2212160 and an Amazon Research Award.
Any opinions and findings expressed in this manuscript are those of the authors and do not necessarily reflect the views of the sponsors.


\nocite{cloudbank}

\bibliography{main,more-references}
\bibliographystyle{plainnat}


\newpage
\appendix
\begin{Large}
    \begin{center}
        Supplementary Material for \\
        \textbf{A Multi-Task Perspective for Link Prediction \\
        with New Relation Types and Nodes}
    \end{center}
\end{Large}

\section{Expanded Related Work} \label{app:related-work}
\paragraph{GNNs for \namekg completion.} Due to their recent success in diverse graph-learning tasks, Graph Neural Networks (GNNs) have been widely used to predict missing attributed links between nodes in \namekgs. One of the first adaptation of standard GNNs to multi-relational data was proposed in~\citet{schlichtkrull2018modeling}, while an alternative formulation has been considered in \citet{vashishth2020somposition}. These two models have then inspired several improved versions for both transductive~\citep{galkin2020message,yu2020generalized,zhang2022rethinking} and inductive~\citep{teru2020inductive,zhu2021neural,ali2021improving,zhang2022knowledge} link prediction tasks on \namekgs, even in the context of large graphs~\citep{galkin2022nodepiece}. Recently, their limitations and their relationships have been studied from a theoretical viewpoint, by relating their capabilities in distinguishing different \namekgs to the Weisfeiler-Leman algorithm~\citep{barcelo2022weisfeiler}. These methods, however, cannot work when presented with new relation types in test, which instead represents the main interest of our work. 

\paragraph{Tensor Factorization.} Factorization-based methods~\citep{nickel2011three,bordes2013translating,wang2014knowledge,yang2015embedding,nickel2016holographic,trouillon2016complex,lacroix2018canonical,dettmers2018convolutional,nguyen2018novel,sun2018rotate,Chen2021RelationPA} are classical graph representation learning methods for attributed graphs. Despite their superior empirical performance on transductive tasks, especially when coupled with specific training strategies \citep{ruffinelli2020You,jain2020knowledge}, these models cannot be applied to inductive tasks featuring new nodes in test. To overcome this limitation, \citet{chen2022refactor} propose a new architecture that borrows principles from GNNs and bridges the gap between these two approaches. All these methods, however, are not applicable to the tasks of our interest, where test graphs contain both new nodes and relation types.

\paragraph{Logical reasoning.} Predicting missing attributed links in \namekg can also be performed by learning logical rules that are then used to infer the missing links. \citet{yang2015embedding,yang2017differentiable,sadeghian2019drum,chen2022rlogic} focus on learning Horn clauses from the graph. To understand the expressive power of standard GNNs in learning logical rules, \citet{barcelo2020logical} characterize the fragment of FOC$_2$ formulas, a well-studied
fragment of first order logic, that can be expressed as GNNs. Recently, \citet{qiu2023logical} extended the analysis to heterogeneous graphs.

\paragraph{Zero-shot learning for link prediction in \namekgs.} To predict links involving completely new relation types at test time, zero-shot methods
\revision{typically}
require additional information encoding the semantic of the relation types. \citet{qin2020generative} rely on semantic features obtained from the text descriptions of the relation types. \citet{geng2021ontozsl} enrich the relation features using information from the ontological schema. Finally,
\citet{li2022hierarchical} use the character n-gram information from
the relation name to generate more expressive representations of the relations. As we do not assume access to any extra information apart from the input graphs, not even the relation textual names\footnote{We always consider relation types as numbers, $\cR \coloneqq \{1, \ldots, R\}$, \revision{$R \in \mathbb{N}$}.}, these methods are inapplicable to our scenario.
\revision{
To the best of our knowledge, InGram~\citep{lee2023ingram} and ISDEA~\citep{kg-equivariance} are the only methods capable of generalizing to new test graphs without extra information.
InGram~\citep{lee2023ingram} introduced a method for learning relation embeddings within a relation graph specifically designed to capture the structural affinity between relation types. 
Later, ISDEA~\citep{kg-equivariance} introduced the concept of double-exchangeability, which includes exchangeability between relation types, intuitively their property of being interchangeable with one another, and 
proposed the double equivariant representations capable of generalizing to unseen relations. It also
proved that 
InGram in fact produces positional embeddings that are double equivariant in distribution.
}

\paragraph{Few-shot learning for link prediction in \namekgs.} To the best of our knowledge, most few-shot methods predict novel relation types in test following a meta-learning paradigm~\citep{xiong2018one,chen2019meta,zhang2020few,sun2021one,qian2022few}. 
\revision{
For instance, GMatching~\citep{xiong2018one} proposes to solve the one-shot relation prediction problem by matching the similarity of the new relation type to those seen in training. FSRL~\citep{zhang2020few} extends GMatching to the few-shot setting by using an attention aggregation so that information from all support triplets of the new relation type can be utilized. 
MetaR~\citep{chen2019meta} computes a meta representation for relation types by averaging all node pair-specific relation representations.
CSR~\citep{qian2022few}, on the other hands, matches the test relations to the training ones by comparing the connection subgraphs surrounding the target triplets generated by a hypothesis testing procedure. 
Even though these methods are capable of reasoning over new relation types, and some, such as CSR~\citep{qian2022few}, are also capable of handling new nodes, they all require the presence of a shared observable graph between training and test. In other words, the few-shot triplets of the new relation types and new nodes need be connected to the existing ones already observed in training. 
Hence, they are more constrained than our method since we consider a completely new graph at test time involving no nodes and relations seen during training.
}

\section{Theoretical Analysis} \label{app:proof}

\equivalence*
\begin{proof} We prove each property separately.

    The exchangeability between relation types $\sim_e$ is reflexive. This can be trivially shown by considering the identity node permutation $\text{Id}_N \in \sS_N$ and identity relation type permutation $\text{Id}_{R} \in \sS_R$. Namely, let $\kgadj \in \kgspace$ be a random variable representing an attributed graph sampled from some data distribution. For any relation $r \in \cR$, naturally $\text{Id}_{R} \circ r = r$ and $\text{Id}_{R} \circ \text{Id}_{N} \circ \kgadj = \kgadj$, and consequently $P(\text{Id}_{R} \circ \text{Id}_{N} \circ \kgadj) = P(\kgadj)$. Hence, $r$ is exchangeable with $r$.

    The exchangeability between relation types $\sim_e$ is symmetric. We first note that the node permutation and relation type permutation are commutative \cite{kg-equivariance}. That is, given any attributed graph $\kgadj \in \kgspace$, $\pi \in \sS_N$, and $\sigma \in \sS_R$, we have $\sigma \circ \pi \circ \kgadj = \pi \circ \sigma \circ \kgadj$. In other words, it makes no difference whether we permute the nodes first or we permute the relation types first. Now, for any two relations $r, r' \in \cR$, if $r$ is exchangeable with $r'$, then we know there exists some $\pi \in \sS_N$ and $\sigma \in \sS_R$ such that $\sigma \circ r = r'$ and $P(\kgadj) = P(\sigma \circ \pi \circ \kgadj)$ for any $\kgadj$ sampled from the data distribution. Since $\sS_N$ and $\sS_R$ are groups, $\pi$ and $\sigma$ have unique inverses $\pi' \in \sS_N$ and $\sigma' \in \sS_R$ satisfying $\pi' \circ \pi = \text{Id}_N$ and $\sigma' \circ \sigma = \text{Id}_R$ respectively. Hence,
    \begin{align*}
        &\sigma' \circ r' = \sigma' \circ (\sigma \circ r) = (\sigma' \circ \sigma) \circ r = \text{Id}_N \circ r = r \\
        &\sigma' \circ \pi' \circ (\sigma \circ \pi \circ \kgadj) = \sigma' \circ (\pi' \circ \pi) \circ \sigma \circ \kgadj = \sigma' \circ \sigma \circ \kgadj = \kgadj .
    \end{align*}
    Consequently, we have $P(\sigma \circ \pi \circ \kgadj) = P(\kgadj) = P(\sigma' \circ \pi' \circ (\sigma \circ \pi \circ \kgadj))$. Moreover, since permutations are bijective mappings, we know that for any $\kgadj' \in \kgspace$ there exists some $\kgadj$ such that $\kgadj' = \sigma \circ \pi \circ \kgadj$. Hence, $P(\kgadj') = P(\sigma \circ \pi \circ \kgadj) = P(\sigma' \circ \pi' \circ (\sigma \circ \pi \circ \kgadj)) = P(\sigma' \circ \pi' \circ \kgadj')$ for any $\kgadj'$ sampled from the data distribution. Therefore, $r'$ is also exchangeable with $r$. 

    The exchangeability between relation types $\sim_e$ is transitive. Let $r_1, r_2, r_3 \in \cR$ be three relation types such that $r_1$ is exchangeable with $r_2$, and $r_2$ is exchangeable with $r_3$. Then, there exists some $\pi_1, \pi_2 \in \sS_N$ and $\sigma_1, \sigma_2 \in \sS_R$ such that
    \begin{align*}
        \sigma_1 \circ r_1 = r_2 \qquad &\text{and} \qquad P(\kgadj) = P(\sigma_1 \circ \pi_1 \circ \kgadj) \\
        \sigma_2 \circ r_2 = r_3 \qquad &\text{and} \qquad P(\kgadj') = P(\sigma_2 \circ \pi_2 \circ \kgadj') ,
    \end{align*}
    for any $\kgadj$ and $\kgadj'$ sampled from the data distribution. Hence, take any $\kgadj \in \kgspace$,
    \begin{align*}
        P(\kgadj) 
        &= P(\sigma_1 \circ \pi_1 \circ \kgadj) = P(\sigma_2 \circ \pi_2 \circ (\sigma_1 \circ \pi_1 \circ \kgadj)) \\
        &= P((\sigma_2 \circ \sigma_1) \circ (\pi_2 \circ \pi_1) \circ \kgadj),
    \end{align*}
    where we also have $(\sigma_2 \circ \sigma_1) \circ r_1 = r_3$, showing that $r_1$ is exchangeable with $r_3$. 

    Since the exchangeability between relation types is reflexive, symmetric, and transitive, it is an equivalence relation on $\cR$.
\end{proof}

\section{Datasets Construction} \label{app:dataset}

\subsection{\textsc{WikiTopics-MT}}
The \textsc{WikiTopics-MT} scenarios are derived from the \textsc{WikiTopics} dataset previously introduced by \citet{kg-equivariance}, which comprises 11 \namekgs with relation types in each \namekg corresponding to a specific topic. Our goal is to construct training graphs exhibiting multi-task structures while ensuring the test graph possesses a task also present in training. In \citet{kg-equivariance}, this dataset was leveraged to assess the ISDEA model's zero-shot generalization capabilities on pairs of \namekgs that may have relation types not exchangeable with each other. We observe that the ISDEA model's performance can be viewed from an alternative perspective: poor test performance on a certain topic when trained on a different topic indicates that the topics contain relation types that are likely not exchangeable. Consequently, the corresponding \namekgs likely contains distinct tasks (\Cref{def:tasks}), implying that combining the \namekgs of these two topics would yield an aggregated \namekg containing multiple tasks.

\begin{figure}
    \centering
    \begin{subfigure}[b]{0.48\textwidth}
        \centering
        \includegraphics[width=\textwidth]{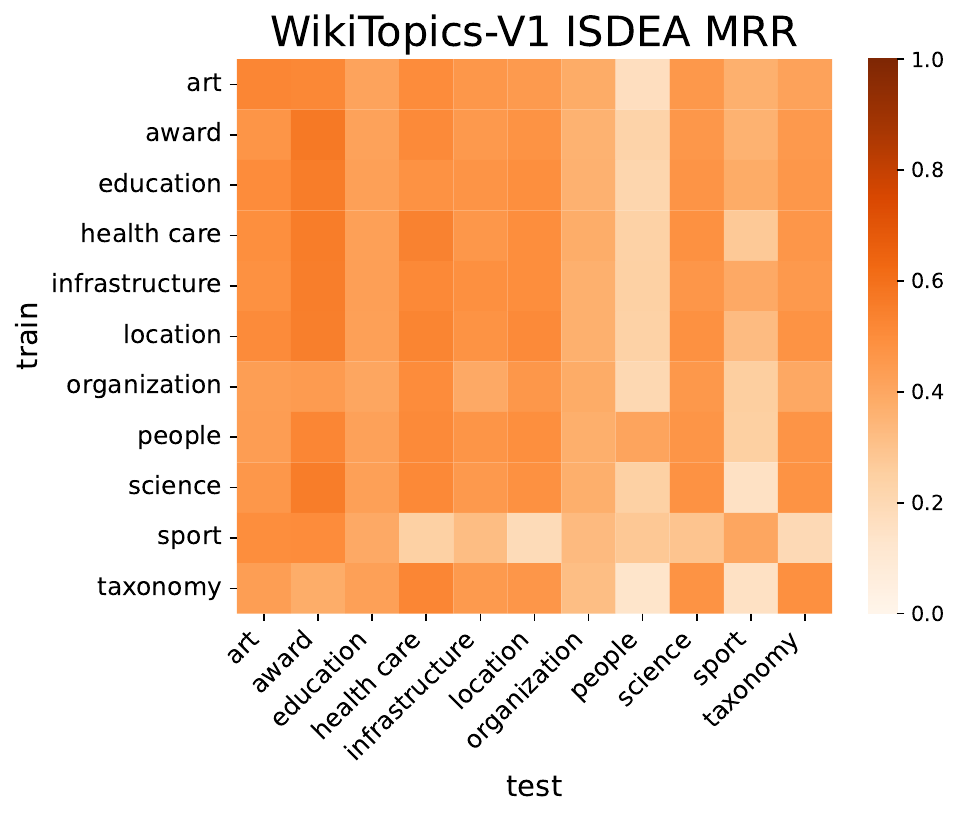}
        \caption{\textsc{WikiTopics-v1}.}
        \label{fig:isdea-transfer-wiki-v1}
    \end{subfigure}
    \hfill
    \begin{subfigure}[b]{0.48\textwidth}
        \centering
        \includegraphics[width=\textwidth]{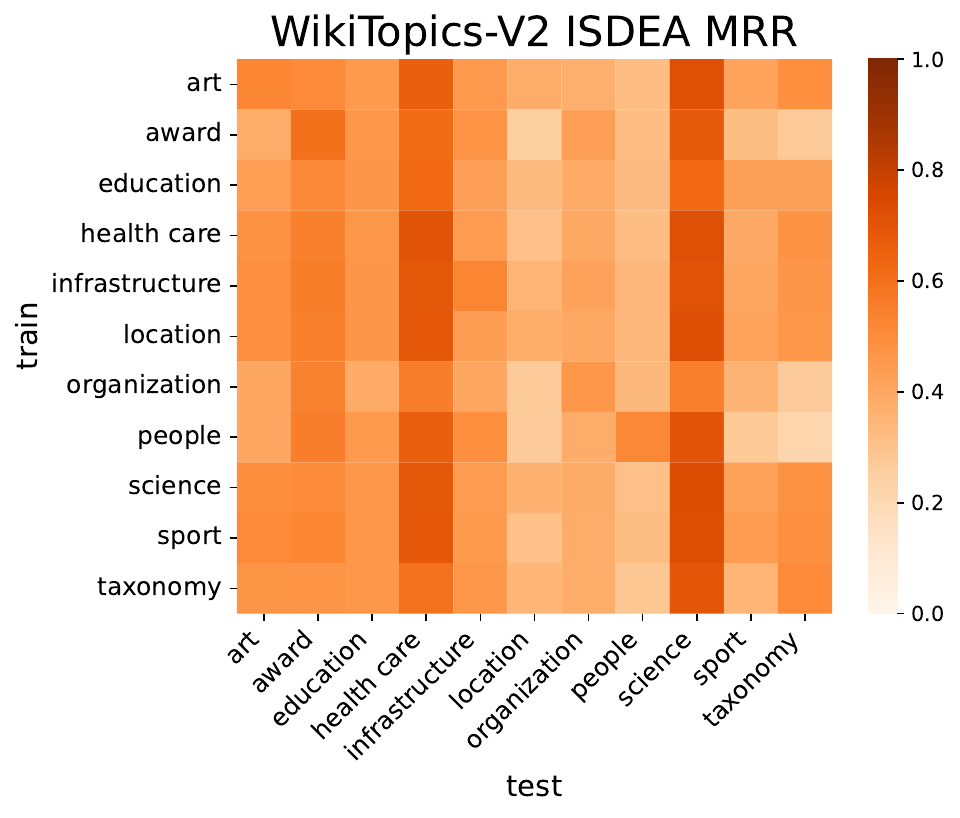}
        \caption{\textsc{WikiTopics-v2}.}
        \label{fig:isdea-transfer-wiki-v2}
    \end{subfigure}
    \caption{ISDEA \cite{kg-equivariance} transfer-topics performance on both versions of \textsc{WikiTopics} without the shortest-distance heuristic embeddings. Color indicates the value of the dual-sampling MRR metric.}
    \label{fig:isdea-transfer-wiki}
\end{figure}

To identify which pairs of topics are likely to contain distinct tasks, we rerun the ISDEA model from \citet{kg-equivariance} on both versions of the \textsc{WikiTopics} dataset. To reduce the memory footprint and the computational time, we run the experiments without the shortest-distance heuristic embeddings that are used to augment the triplet representations in \citet{kg-equivariance} (as explained in \Cref{app:exp-details}). 
\Cref{fig:isdea-transfer-wiki-v1,fig:isdea-transfer-wiki-v2} show the heatmaps representing the transfer-topic performance of the ISDEA model on the 121 pairs of topics for each version of the dataset.  Each row in \Cref{fig:isdea-transfer-wiki-v1,fig:isdea-transfer-wiki-v2} corresponds to one training topic, each column corresponds to a test topic, and the color represents the performance evaluated using the dual-sampling Mean Reciprocal Ranks (MRR). 

Based on the heatmaps (\Cref{fig:isdea-transfer-wiki-v1,fig:isdea-transfer-wiki-v2}), we devise multi-task scenarios using the outlined strategy. For the training graph, we pick two topics such that when trained on one and evaluated on the other the performance is low (e.g. \textsc{Health} and \textsc{Sport} in \Cref{fig:isdea-transfer-wiki-v1}). This indicates that the ISDEA model, trained on one topic (\textsc{Health}), demonstrates relatively poor zero-shot generalization performance on the other topic (\textsc{Sport}), suggesting that the \namekgs associated with these topics likely contain distinct tasks. 
We then combine the \namekgs of the two topics to create an aggregated graph (\textsc{Health} + \textsc{Sport}), which is expected to exhibit multi-task structures. 
Next, we select one test topic for each training topic, and evaluate separately on the two test topics. A test topic (e.g., \textsc{Location}) is determined such that the ISDEA model trained on one of the training topics (in this case, \textsc{Science}) performs well on the selected test topic (\textsc{Location}), but the good performance on this test topic may not necessarily be observed when the ISDEA model is trained on the other training topic (\textsc{Sport}).

Using this data creation strategy, we construct the following four multi-task scenarios, two for each version of \textsc{WikiTopics}:

\begin{enumerate}
    \item \textbf{\textsc{WikiTopics-MT1}}: created from \textsc{WikiTopics-V1}. The training topic is a combination of \textsc{Art} and \textsc{People}, and the 2 test topics are \textsc{Health} and \textsc{Taxonomy}.
    \item \textbf{\textsc{WikiTopics-MT2}}: created from \textsc{WikiTopics-V1}. The training topic is a combination of \textsc{Sport} and \textsc{Health}, and the 2 test topics are \textsc{Location} and \textsc{Science}.
    \item \textbf{\textsc{WikiTopics-MT3}}: created from \textsc{WikiTopics-V2}. The training topic is a combination of \textsc{People} and \textsc{Taxonomy}, and the 2 test topics are \textsc{Art} and \textsc{Infrastructure}.
    \item \textbf{\textsc{WikiTopics-MT4}}: created from \textsc{WikiTopics-V2}. The training topic is a combination of \textsc{Location} and \textsc{Organization}, and the 2 test topics are \textsc{Health} and \textsc{Science}.
\end{enumerate}

\begin{table}
    \centering
    \caption{Dataset statistics of the \textsc{WikiTopics-MT} scenarios.}
    \label{tab:wikitopics-mt-stats}
    \resizebox{\linewidth}{!}{
    \begin{tabular}{llcccc}
        \toprule
        Scenarios & Train/Test Topics & \# Entities & \# Relation Types & \# Observable Triplets & \# Missing Triplets \\
        \midrule
        \multirow{3}{*}{\textsc{WikiTopics-MT1}}  & \textsc{Art} + \textsc{People} (Train) & 10000 & 35 & 32145 & 3571 \\
                                                 & \textsc{Health} (Test)                 & 10000 & 8  & 14110 & 1566 \\
                                                 & \textsc{Taxonomy} (Test)               & 10000 & 10 & 16526 & 1834 \\
        \midrule
        \multirow{3}{*}{\textsc{WikiTopics-MT2}} & \textsc{Sport} + \textsc{Health} (Train) & 10000 & 19 & 43528 & 4836 \\
                                                 & \textsc{Location} (Test)                 & 10000 & 11 & 22971 & 2552 \\
                                                 & \textsc{Science} (Test)                  & 10000 & 17 & 14852 & 1650 \\              
        \midrule
        \multirow{3}{*}{\textsc{WikiTopics-MT3}} & \textsc{People} + \textsc{Taxonomy} (Train) & 10000 & 99 & 57815 & 6243 \\
                                                 & \textsc{Art} (Test)                         & 10000 & 65 & 28023 & 3113 \\
                                                 & \textsc{Infrastructure} (Test)              & 10000 & 37 & 21646 & 2405 \\  
        \midrule
        \multirow{3}{*}{\textsc{WikiTopics-MT4}} & \textsc{People} + \textsc{Taxonomy} (Train) & 10000 & 94 & 54140 & 6015 \\
                                                 & \textsc{Location} (Test)                    & 10000 & 62 & 80269 & 8918 \\
                                                 & \textsc{Organization} (Test)                & 10000 & 34 & 30214 & 3357 \\
        \bottomrule
    \end{tabular}
    }
\end{table}

\Cref{tab:wikitopics-mt-stats} shows the datasets statistics of the 4 multi-task scenarios. The experiment results of the \textsc{WikiTopics-MT1} scenario are shown in \Cref{par:wikitopics-mt-results} in the main paper, and the additional experiment results of the \textsc{WikiTopics-MT2}, \textsc{WikiTopics-MT3}, and \textsc{WikiTopics-MT4} scenarios are shown in \Cref{app:wikitopics-mt}.

\subsection{\textsc{FBNELL}}
We create the \textsc{FBNELL} dataset by combining the \textsc{FB15K-237}~\cite{schlichtkrull2018modeling} and the \textsc{NELL-995}~\cite{Nell995} datasets. 
The training graph is obtained by first choosing the top 50 most frequent relation types in each \textsc{FB15K-237} and \textsc{NELL-995}, yielding a total of 100 relation types, and then extracting the triplets corresponding to these 100 relation types. 
For the test graph, we pick the top 100 most frequent relation types from each dataset, extract the triplets corresponding to the resulting 200 relation types, and predict only those triplets that involve new relation types, while using the remaining triplets as the observable (test) graph. Consequently, the test graph's set of relation types forms a strict superset of those in the training graph, but the evaluation is performed only on the unseen ones. \Cref{tab:fbnell-stats} shows the statistics of the dataset.

We emphasize that, unlike in the data construction strategy employed for the \textsc{WikiTopics-MT} scenarios, we do not actively identify the presence of multiple tasks within either \textsc{FB15K-237} or \textsc{NELL-995}, nor verify whether their combination exhibit a multi-task structure.
Therefore, even though the two datasets come from distinct domains, they might still share the same relational task (single-task). 

\begin{table}
    \centering
    \caption{Dataset statistics of the \textsc{FBNELL} dataset.}
    \label{tab:fbnell-stats}
    \resizebox{\linewidth}{!}{
    \begin{tabular}{lcccc}
        \toprule
        Train/Test Splits & \# Entities & \# Relation Types & \# Observable Triplets & \# Missing Triplets \\
        \midrule
        Train & 4797 & 100 & 10275 & 1224 \\
        Test  & 4725 & 200 & 10685 & 597 \\
        \bottomrule
    \end{tabular}
    }
\end{table}

\subsection{\textsc{MetaFam}}  \label{app:synthetic-dataset}
We construct a synthetic dataset, dubbed \textsc{MetaFam}, that explicitly exhibits conflicting predictive patterns, or equivalently, a multi-task structure. In particular, we recreate the conflicting predictive patterns shown in \Cref{fig:multi-task}.
We generate the dataset by first creating the family trees using the ontology and the code provided in \citet{hohenecker2020ontology}. Each family tree is generated by starting from a single person and incrementally adding a new child to an existing node until the tree reaches the maximum size of 26 nodes or a maximum depth of 5. The parent of the node to be added is chosen uniformly at random, with the only constraint that the maximum branching factor of each tree is 5. Each family tree contains triplets involving 29 different relation types representing different kinds of relationships, such as \texttt{mother\_of}, \texttt{daughter\_of}, \texttt{uncle\_of}.

We generate the training split by randomly selecting 50 non-isomorphic family trees. In each training family tree we mask out either some of the triplets with relation types \texttt{mother\_of} and \texttt{father\_of}, or some of the triplets involving relation types \texttt{son\_of} and \texttt{daughter\_of}, and we use those as the triplets we aim to predict during training time. Doing so ensures that the model is challenged with the two conflicting patterns illustrated in \Cref{fig:multi-task} when learning to predict these triplets. Specifically, the former two relation types, \texttt{mother\_of} and \texttt{father\_of}, obey to the first kind of predictive pattern (e.g. subgraph $\kgadj'_1$ as illustrated in \Cref{fig:multi-task}), and the latter two relation types, \texttt{son\_of} and \texttt{daughter\_of}, follow the second kind of predictive pattern (the rest of the training graph $\kgadj_1$ as illustrated in \Cref{fig:multi-task}). As the test split, we create 25 additional non-isomorphic family trees having the same relation types of the training \namekg but permuted. In test, we only mask out triplets corresponding to the (permuted) relation types \texttt{mother\_of} and \texttt{father\_of}, so that only one predictive pattern is required to accurately predict these missing triplets at test time.

\begin{table}
    \centering
    \caption{Dataset statistics of the \textsc{MetaFam} dataset.}
    \label{tab:metafam-stats}
    \resizebox{\linewidth}{!}{
    \begin{tabular}{lcccc}
        \toprule
        Train/Test Splits & \# Entities & \# Relation Types & \# Observable Triplets & \# Missing Triplets \\
        \midrule
        Train & 1316 & 29 & 13630 & 781 \\
        Test  & 656  & 29 & 7257  & 184 \\
        \bottomrule
    \end{tabular}
    }
\end{table}

\Cref{tab:metafam-stats} shows the statistics of the \textsc{MetaFam} dataset. The experiment results are described in the main text and in \Cref{app:synthetic}. 

\section{Implementation and Experiment Details} \label{app:exp-details}

\subsection{Licenses, Computational Resources and Experimental Setup}

We implemented our MTDEA model using PyTorch~\citep{PyTorch} and PyTorch Geometric~\citep{PyG}, which are available under the BSD and MIT license respectively. The Wikidata knowledge base~\citep{Wikidata}, which the \textsc{WikiTopics} dataset is based on, is available under the CC0 1.0 license. We ran our experiments on NVIDIA V100, A100, GeForce RTX 2080Ti, GeForce RTX 4090, and Titan V GPUs. We use Weights \& Biases~\citep{wandb} to perform hyperparameter tuning. We train all models (baselines and MTDEA) in all experiments for a maximum of 10 epochs, with an early stop patience of 5 epochs based on the dual-sampling MRR value on the validation set. At test time we adapt our MTDEA models to learn the task assignments for the test relation types (as described in \Cref{sec:transductive}) for a maximum of 10 epochs. For our MTDEA models, we train with a number of maximum task partitions $\hat{K}=2, 4, 6$ in all experiments. The time spent on each experiment depends mainly on the size of the dataset and on $\hat{K}$. For example, training MTDEA with $\hat{K}=4$ on \textsc{WikiTopics-MT3} takes around 10 hours to complete, while training MTDEA with $\hat{K}=2$ on \textsc{WikiTopics-MT1} takes around 2 hours and 30 minutes.
Our code and datasets are available. \footnote{\url{https://anonymous.4open.science/r/MTDEA}.}

\subsection{Details of the Neural Architecture}

\paragraph{Attention matrix.}
In our implementation, the attention matrix $\alpha$ (\Cref{sec:attn-weights}) is obtained from a real-valued learnable weight matrix $w \in \sR^{R \times \hat{K}}$, where $R$ is the number of relations and $\hat{K}$ the number of maximum partitions we allow. 
We apply a Softmax activation over the task partition dimension for every relation type $r \in \cR$, i.e., $\alpha_{r, k} = \frac{\exp(w_{r, k})}{\sum_{k'=1}^{\hat{K}} \exp(w_{r, k'})}$, for $k \in \{1, \ldots \hat{K}\}$. 
At training time we refer to $w$ as $w^{(\text{tr})}$, since $R$ is $R^{(\text{tr})}$, the number of training relation types.
During the test-time adaptation, we freeze all parameters of the model, we discard $w^{(\text{tr})}$ and initialize a new matrix $w^{(\text{adapt})} \in \sR^{R^{(\text{te})} \times \hat{K}}$, where $R^{(\text{te})}$ is the number of test relation types. Then, $w^{(\text{adapt})}$ is optimized via gradient descent with the same training loss used in training (\Cref{eq:full-loss}), with the only difference that the positive and negative triplets are now sampled from the observable test graph $\kgadj^{\text{(te)}}$.

\paragraph{Structural node representation for link prediction tasks.} Structural node representations, which are obtained from GNNs, are known to have limited capabilities for link prediction tasks in homogeneous graphs~\citep{srinivasan2020on,you2019position}, an issue that also arises in \namekgs. \revision{Theoretically, \Cref{eq:asymcond-eqvlin} overcomes this limitation by employing GNNs that output pairwise-representations as $L_1^{(t)}, L_2^{(t)}, L_3^{(t)}$}. However, most-expressive pairwise representations~\citep{zhang2018link,zhu2021neural,zhang2021labeling} are computationally expensive. In \citet{kg-equivariance}, the authors sought a middle ground for their ISDEA model \revision{by employing structural node representations enhanced with heuristic embeddings}, such as the shortest distances between the two nodes in the pair to be predicted. Specifically, the representation for a triplet $(u, r, v)$, with $u,v \in \cV, r \in \cR$, before the final MLP layers is obtained as $h_{u, r}^{(T)} \| h_{v, r}^{(T)} \| d(u, v) \| d(v, u)$, where $h_{u, r}^{(T)}$ and $h_{u, r}^{(T)}$ are the structural node representations for, respectively, nodes $u$ and $v$ specific to the relation type $r$ obtained after $T$ ISDEA layers; $d(u, v)$ and $d(v, u)$ are the shortest distances from node $u$ to $v$ and from node $v$ to $u$ in the directed \namekg, and $\|$ denotes the vector concatenation operation. 

Nevertheless, computing the shortest distance, $d(u, v)$, for all pairs $(u,v)$, $u,v \in \cV$ in the \namekg is time- and space-demanding. Due to this limitation, we opt \textit{not} to compute the shortest distances, and instead use only the structural node representations as the representation of a triplet $(u, r, v)$, that is $h_{u, r}^{(T)} \| h_{v, r}^{(T)}$, in all our experiments, except for the synthetic \textsc{MetaFam} dataset. \revision{In practice, this means that we implement $L_1^{(t)}, L_2^{(t)}, L_3^{(t)}$ in \Cref{eq:asymcond-eqvlin} as GNNs outputting node representations}. We empirically observe no performance degradation when removing the shortest distance heuristics on real-world \namekgs. 

\paragraph{Layers.}
In all our experiments, excluding those on the synthetic \textsc{MetaFam} dataset, our MTDEA model employs two GNN-based soft \namelayersabbr (\Cref{eq:asymcond-eqvlin}). Conversely, for the synthetic \textsc{MetaFam} dataset, our model consists of only one GNN-based \namelayerabbr, in order to learn exactly the conflicting predictive patterns depicted in \Cref{fig:multi-task}. We select GIN~\citep{xu2018powerful} with $\epsilon = 0$ as our GNN layer, which implements the $L_1$, $L_2$, and $L_3$ components of a \namelayerabbr.

After these GNN-based \namelayersabbr, we employ two soft \namelayersabbr with MLPs $L_1$, $L_2$, and $L_3$ components. The representation of each triplet $(u,r,v)$ with $u,v \in \cV$, $r \in \cR$ is then obtained using the node representations after these layers as $h_{u, r}^{(T)} \| h_{v, r}^{(T)}$, and it is then passed to
a two-layers MLP to obtain the final prediction. 

\subsection{Hyper-parameters}

In all experiments, involving either our MTDEA models or the baseline ISDEA-homo, we train with mini-batches comprising 256 positive triplets. We use a training negative sample rate of 2 for both the tail-based negative samples and the relation-based negative samples. Hence, for each positive triplet in a minibatch, we construct four negative samples, thus resulting in mini-batches containing 1280 triplets in total. Additional hyper-parameters values for MTDEA and ISDEA-homo include hidden layer dimension of 32, ReLU activations, mean set aggregation (i.e. the set aggregation operation within the parentheses of $L_2$ and $L_3$ in \Cref{eq:cond-eqvlin}), a gradient clipping norm of 1.0, a learning rate of 0.001, and a weight decay rate of $5 \times 10^{-4}$ in our Adam optimizer.

Apart from the aforementioned hyper-parameters, our MTDEA features additional hyper-parameters associated with its regularization losses. In all our experiments, we set the initial regularization coefficient values to $\lambda_1 = 0.1$ and $\lambda_2 = 0.1$ (\Cref{eq:full-loss}), with a per-epoch multiplicative annealing factor of $1.1$. This approach ensures that the regularization gains more significance in later epochs (closer to the end). That is, after each epoch, the values of $\lambda_1$ and $\lambda_2$ are updated to $1.1 \times \lambda_1$ and $1.1 \times \lambda_2$ respectively. 

For the NBFNet-homo baseline, we use the default hyper-parameters provided by \citet{zhu2021neural}. Specifically, we choose a hidden dimension of 32 for all layers, the distance multiplier message function, the pna aggregate function, and we employ layer norm. We further use Adam optimizer with a learning rate of $0.005$ and a batch size of 64. 

\revision{
For the InGram~\citep{lee2023ingram} baseline, we tune its hyperparameters on all datasets by performing a grid search over the configuration of ranking loss margin $\gamma \in \{ 1.0, 2.0 \}$, learning rate $\alpha \in \{ 0.0005, 0.001 \}$, number of relation layers $L \in \{ 1, 2, 3 \}$, and number of entity layers $\hat{L} \in \{ 2, 3, 4 \}$. We use the suggested best values from~\citet{lee2023ingram} and default values present in their codebase for all other hyperparameters, such as the number of bins $B = 10$ and the number of attention heads $K=8$. We fix the entity embedding dimension and relation embedding dimension size to 32 for a fair comparison with other models. 
}

\section{Additional Experiments} \label{app:exp}

\subsection{Dual-sampling versus Entity-centric Metrics} \label{app:metrics}

In both our training loss (\Cref{eq:task-loss}) and evaluation metrics (\Cref{par:dual-sampling-metrics}), we adopt a dual-sampling scheme wherein for each positive triplet we draw two different types of negative samples: those with the tail node randomly corrupted (entity-centric) and those with the relation type randomly corrupted (relation-type centric). In contrast, existing literature focuses solely on entity-centric negative samples~\cite{yang2015embedding,schlichtkrull2018modeling,zhu2021neural}. We argue that correctly predicting the tail node, given a head node and a relation type, is as important as determining the type of relation connecting a head and a tail node of interest, and therefore the two negative sampling schemes should be used in conjunction. This combination results in the dual-sampling metrics we propose.

\Cref{tab:wikitopics-mt1-results,tab:wikitopics-health-tax-ent,tab:fbnell-result-both-metrics} compare the performances of various models under the dual-sampling metrics with their performances under the traditional entity-centric metrics, commonly used in the literature. As can be seen from the tables, the baseline NBFNet-homo demonstrates strong performances under the entity-centric metrics, but not under the dual-sampling metrics. In particular, it achieves 95\% entity-centric Hits@10 on the \textsc{FBNELL} dataset. Therefore \textit{a model that disregards the information contained in the relation types can achieve near-perfect accuracy under the entity-centric metrics} (simply by predicting whether $u$ is connected to $v$, irrespective of the relation type).
These results suggest that the entity-centric metrics are insufficient for assessing model performance, as homogeneous link prediction methods can easily solve a task when evaluated based on these metrics.
\revision{
Such an observation is also echoed by~\citet{jambor2021exploring} on few-shot link prediction tasks, in which the authors commented that they found ``a simple zero-shot baseline - which ignores any relation-specific information - achieves surprisingly strong performance," further giving grounds to the importance of evaluating a model's performance not only on predicting entities but also on predicting relation types between nodes.
}
In contrast, the homogeneous models achieve at most 38\% Hits@10 accuracy under the dual-sampling metrics, illustrating that the dual-sampling scheme is a more suitable, comprehensive, and challenging evaluation scheme, where homogeneous link prediction models cannot unreasonably obtain near-perfect performances. 

\begin{table}
    \centering
    \caption{Model performances on \textsc{WikiTopics-MT1} under the \textbf{dual-sampling metrics}, tested on two topics (\textsc{Health} and \textsc{Taxonomy}) not seen in training (\textsc{Art} + \textsc{People}). We report mean and std across 3 random seeds. For our MTDEA, $\hat{K}$ denote the maximum number of tasks the architecture can model (\Cref{sec:attn-weights}).} \label{tab:wikitopics-mt1-results} 
    \resizebox{\linewidth}{!}{
    \begin{tabular}{lrrrrrrrrrr}
        \toprule
         & \multicolumn{4}{c}{Test on \textsc{Health} topic} & \multicolumn{4}{c}{Test on \textsc{Taxonomy} topic} \\
        \cmidrule(r){2-5} \cmidrule(l){6-9} 
        Models & {MR $\downarrow$} & {MRR $\uparrow$} & {Hits@1 $\uparrow$} & {Hits@10 $\uparrow$} & {MR $\downarrow$} & {MRR $\uparrow$} & {Hits@1 $\uparrow$} & {Hits@10 $\uparrow$} \\
        \midrule
         NBFNet-homo                    & 15.843 (0.119)   & 0.122 (0.001) & 0.041 (0.000) & 0.339 (0.003) & \underline{16.260} (0.068) & 0.113 (0.000) & 0.034 (0.000) & 0.315 (0.001)  \\
        IS-DEA-homo                     & 42.276 (0.768) & 0.025 (0.001) & 0.000 (0.000) & 0.000 (0.000)   & 40.963 (1.276) & 0.026 (0.001) & 0.000 (0.000) & 0.000 (0.000)  \\
        IS-DEA~\citep{kg-equivariance}  & 15.405 (6.030)  & 0.384 (0.133) & 0.323 (0.140) & 0.481 (0.138)  & 19.143 (3.895)  & 0.323 (0.063) & 0.269 (0.063) & 0.365 (0.082) \\
        InGram~\citep{lee2023ingram} & \textbf{5.489} (1.701) & 0.342 (0.074) & 0.122 (0.037) & \textbf{0.869} (0.117) & \textbf{7.045} (3.136) & 0.368 (0.113) & 0.198 (0.100) & \textbf{0.723} (0.220) \\
        \midrule
        MTDEA ($\hat{K}=2$)             & 15.015 (4.947)  & \underline{0.422} (0.170) & 0.358 (0.191) & 0.513 (0.112) & 16.840 (4.222)  & \underline{0.393} (0.141) & \underline{0.330} (0.157) & 0.457 (0.121) \\
        MTDEA ($\hat{K}=4$)             & 16.576 (2.293)  & 0.441 (0.115) & \underline{0.390} (0.127) & 0.496 (0.108) & 18.171 (4.242)  & 0.365 (0.191) & 0.307 (0.203) & 0.417 (0.175) \\
        MTDEA ($\hat{K}=6$)             & \underline{13.774} (3.224)  & \textbf{0.504} (0.013) & \textbf{0.457} (0.012) & \underline{0.555} (0.010) & 16.902 (3.837)  & \textbf{0.470} (0.018) & \textbf{0.422} (0.010) & \underline{0.504} (0.025)  \\
        \bottomrule
    \end{tabular}
    }

    \bigskip

    \centering
    \caption{Model performances on \textsc{WikiTopics-MT1} under the \textbf{entity-centric metrics}, tested on two topics (\textsc{Health} and \textsc{Taxonomy}) not seen in training (\textsc{Art} + \textsc{People}). We report mean and std across 3 random seeds. For our MTDEA, $\hat{K}$ denote the maximum number of tasks the architecture can model (\Cref{sec:attn-weights}).} \label{tab:wikitopics-health-tax-ent} 
    \resizebox{\linewidth}{!}{
    \begin{tabular}{lrrrrrrrr}
        \toprule
         & \multicolumn{4}{c}{Test on \textsc{Health} topic} & \multicolumn{4}{c}{Test on \textsc{Taxonomy} topic} \\
        \cmidrule(r){2-5} \cmidrule(l){6-9}
        Models & {MR $\downarrow$} & {MRR $\uparrow$} & {Hits@1 $\uparrow$} & {Hits@10 $\uparrow$} & {MR $\downarrow$} & {MRR $\uparrow$} & {Hits@1 $\uparrow$} & {Hits@10 $\uparrow$} \\
        \midrule
        NBFNet-homo                   & \textbf{5.767} (0.238) & \textbf{0.628} (0.004) & \textbf{0.568} (0.004) & \textbf{0.858} (0.017)  & \textbf{5.980} (0.149) & \textbf{0.572} (0.001) & \textbf{0.498} (0.002) & \textbf{0.855} (0.003) \\
        ISDEA-homo                  & 32.793 (1.683) & 0.112 (0.035) & 0.054 (0.025) & 0.172 (0.065) & 30.065 (2.717) & 0.108 (0.051) & 0.039 (0.028) & 0.203 (0.124) \\
        IS-DEA~\citep{kg-equivariance}   & 19.538 (8.350) & 0.364 (0.136) & 0.314 (0.142) & 0.444 (0.140) & 24.138 (4.299) & 0.301 (0.059) & 0.252 (0.063) & 0.336 (0.071) \\
        InGram~\citep{lee2023ingram} & \underline{9.169} (4.148) & 0.504 (0.085) & 0.397 (0.074) & \underline{0.697} (0.013) & \underline{10.962} (4.586) & \underline{0.457} (0.073) & 0.343 (0.070) & \underline{0.682} (0.107)  \\
        \midrule
        MTDEA ($\hat{K}=2$)              & 19.707 (6.651) & 0.422 (0.166) & 0.360 (0.185) & 0.486 (0.138) & 23.146 (7.251) & 0.368 (0.149) & 0.300 (0.158) & 0.457 (0.119) \\
        MTDEA ($\hat{K}=4$)             & 20.316 (5.700) & 0.442 (0.122) & 0.386 (0.131) & 0.494 (0.115) & 22.408 (6.274) & 0.375 (0.132) & 0.310 (0.141) & 0.449 (0.124) \\
        MTDEA ($\hat{K}=6$)            & 14.920 (1.523) & \underline{0.513} (0.013) & \underline{0.455} (0.023) & 0.570 (0.007) & 19.405 (0.518) & 0.453 (0.002) & \underline{0.392} (0.002) & 0.524 (0.002) \\
        \bottomrule
    \end{tabular}
    }

    \bigskip

    \centering
    \caption{Model performances on \textsc{FBNELL} under the \textbf{dual-sampling metrics} (left) and the \textbf{entity-centric metrics} (right). We report mean and std across 3 random seeds. For our MTDEA, $\hat{K}$ denote the maximum number of tasks the architecture can model (\Cref{sec:attn-weights}).} \label{tab:fbnell-result-both-metrics}
    \resizebox{\linewidth}{!}{
    \begin{tabular}{lrrrr||rrrr}
        \toprule
         & \multicolumn{4}{c}{\textbf{Dual-Sampling Metrics}} & \multicolumn{4}{c}{\textbf{Entity-Centric Metrics}} \\
        \cmidrule(r){2-5} \cmidrule(l){6-9} 
        Models & {MR $\downarrow$} & {MRR $\uparrow$} & {Hits@1 $\uparrow$} & {Hits@10 $\uparrow$} & {MR $\downarrow$} & {MRR $\uparrow$} & {Hits@1 $\uparrow$} & {Hits@10 $\uparrow$} \\
        \midrule
        NBFNet-homo                     & 9.410 (0.029)   & 0.129 (0.001) & 0.042 (0.001) & 0.379 (0.002) & \textbf{2.501} (0.041) & \textbf{0.818} (0.008) & \textbf{0.782} (0.010) & \textbf{0.950} (0.001) \\
        ISDEA-homo                  & 31.625 (0.940)  & 0.033 (0.001) & 0.000 (0.000) & 0.000 (0.000) & 9.424 (1.915) & 0.373 (0.028) & 0.235 (0.025) & 0.700 (0.067) \\
        IS-DEA~\citep{kg-equivariance}  & 10.925 (0.383)  & \textbf{0.624} (0.010) & \textbf{0.562} (0.014) & 0.697 (0.010) & 10.924 (0.893) & 0.611 (0.013) & 0.540 (0.009) & 0.712 (0.030) \\
        InGram~\citep{lee2023ingram} & \textbf{4.258} (0.612) & {0.456} (0.051) & 0.251 (0.052) & \textbf{0.932} (0.037) & \underline{4.258} (0.612) & 0.456 (0.051) & 0.251 (0.052) & \underline{0.932} (0.037) \\
        \midrule
        MTDEA ($\hat{K}=2$)             & \underline{9.106} (0.162)   & \underline{0.622} (0.012) & \underline{0.553} (0.010) & \underline{0.704} (0.024) & 10.797 (0.920) & 0.602 (0.011) & 0.524 (0.010) & 0.707 (0.028) \\
        MTDEA ($\hat{K}=4$)             & 10.730 (0.666)  & 0.606 (0.006) & 0.543 (0.007) & 0.680 (0.023) & 10.464 (0.092) & \underline{0.613} (0.011) & \underline{0.540} (0.015) & 0.720 (0.007) \\
        MTDEA ($\hat{K}=6$)             & 10.386 (0.683)  & 0.609 (0.015) & 0.547 (0.017) & 0.678 (0.008) & 10.753 (0.828) & 0.612 (0.005)  & 0.538 (0.008) & 0.719 (0.021) \\
        \bottomrule
    \end{tabular}
    }
\end{table}

\subsection{Synthetic Experiments} \label{app:synthetic}

We conduct additional experiments using the dataset \textsc{MetaFam}, which was explicitly constructed to exhibit conflicting predictive patterns, or multiple tasks, in the \namekgs (\Cref{app:synthetic-dataset}). \Cref{tab:synth-dual-results} shows the results under the dual-sampling metrics. As we can see from the table, our MTDEA model with two task partitions $\hat{K}=2$ consistently obtains the best performance under 
\revision{all metrics except for MR and Hits@10.}
This observation conforms to our expectation, since the \textsc{MetaFam} was constructed to include exactly two conflicting predictive patterns and therefore a model capable of modeling two distinct tasks is expected to obtain the best predictions in this dataset. 
\revision{
We note that, again as mentioned in~\Cref{par:wikitopics-mt-results}, our model falls short of InGram~\citep{lee2023ingram} on MR and Hits@10, but this is likely due to the poor performance of ISDEA~\citep{kg-equivariance} on these metrics (since our model MTDEA is built on ISDEA). Still, MTDEA outperforms ISDEA on MR and Hits@10 with a significant margin. 
}

We further investigate the performances of the models under different metrics. We consider the entity-centric metrics, where we generate 50 negative samples for each positive triplet by corrupting its tail node, and we additionally compare to what we refer as the relation-type centric metrics, obtained by constructing 50 negative samples for each positive triplet by corrupting its relation type. These results are summarized in \Cref{tab:synth-entity-results,tab:synth-rel-results}. We observe that, although our best-performing model (MTDEA with $\hat{K}=2$) is slightly worse than the baseline ISDEA under the entity-centric metrics, it outperforms 
\revision{all the baseline models under most of the relation-based metrics.}

\begin{table}
    \centering
    \caption{Model performances on \textsc{MetaFam} under the \textbf{dual-sampling metrics}. We report mean and std across 3 random seeds. For our MTDEA, $\hat{K}$ denote the maximum number of tasks the architecture can model (\Cref{sec:attn-weights}). \textit{Our MTDEA is the only model that can correctly represent the existing conflicting patterns.}} \label{tab:synth-dual-results}
    \resizebox{\linewidth}{!}{
    \begin{tabular}{lrrrrrr}
        \toprule
        Models & {MR $\downarrow$} & {MRR $\uparrow$} & {Hits@1 $\uparrow$} & {Hits@3 $\uparrow$} & {Hits@5 $\uparrow$} & {Hits@10 $\uparrow$} \\
        \midrule
        NBFNet-homo                     & 13.785 (0.045) & 0.153 (0.001) & 0.068 (0.001) & 0.145 (0.002) & 0.216 (0.001) & 0.400 (0.001) \\
        ISDEA-homo                      & 27.369 (0.027) & 0.037 (0.000) & 0.000 (0.000) & 0.000 (0.000) & 0.000 (0.000) & 0.000 (0.000)  \\
        IS-DEA~\citep{kg-equivariance}  & 10.112 (1.120)  & \underline{0.292} (0.029) & \underline{0.145} (0.025) & \underline{0.323} (0.038) & \underline{0.440} (0.044) & 0.609 (0.050)  \\
        InGram~\citep{lee2023ingram}    & \textbf{7.805} (1.181) & 0.222 (0.029) & 0.050 (0.033) & 0.222 (0.060) & 0.380 (0.062) & \textbf{0.719} (0.135) \\
        \midrule
        MTDEA ($\hat{K}=2$)             & \underline{9.763} (4.246)  & \textbf{0.344} (0.067) & \textbf{0.178} (0.063) & \textbf{0.407} (0.068) & \textbf{0.518} (0.083) & \underline{0.704} (0.072)  \\
        MTDEA ($\hat{K}=4$)             & 13.795 (3.966)  & 0.172 (0.134) & 0.070 (0.114) & 0.150 (0.173) & 0.213 (0.174) & 0.358 (0.199)  \\
        MTDEA ($\hat{K}=6$)             & 11.332 (0.894)  & 0.169 (0.057) & 0.031 (0.029) & 0.148 (0.110) & 0.272 (0.161) & 0.520 (0.117) \\
        \bottomrule
    \end{tabular}
    }

    \bigskip

    \centering
    \caption{Model performances on \textsc{MetaFam} under the \textbf{entity-centric metrics}. We report mean and std across 3 random seeds. For our MTDEA, $\hat{K}$ denote the maximum number of tasks the architecture can model (\Cref{sec:attn-weights}).} \label{tab:synth-entity-results}
    \resizebox{\linewidth}{!}{
    \begin{tabular}{lrrrrrr}
        \toprule
        Models & {MR $\downarrow$} & {MRR $\uparrow$} & {Hits@1 $\uparrow$} & {Hits@3 $\uparrow$} & {Hits@5 $\uparrow$} & {Hits@10 $\uparrow$} \\
        \midrule
        NBFNet-homo                     & \textbf{1.114} (0.071)  & \textbf{0.952} (0.026) & \textbf{0.939} (0.032) & \textbf{0.987} (0.010) & \textbf{0.998} (0.002) & \textbf{1.000} (0.000) \\
        ISDEA-homo                      & 1.742 (0.056)  & 0.741 (0.031) & 0.558 (0.056) & 0.936 (0.004) & 0.991 (0.002) & 0.998 (0.003) \\
        IS-DEA~\citep{kg-equivariance}  & 1.663 (0.069)  & 0.757 (0.015) & 0.580 (0.021) & 0.941 (0.020) & 0.996 (0.006) &  \textbf{1.000} (0.000) \\
        InGram~\citep{lee2023ingram} & \underline{1.552} (0.085) & \underline{0.788} (0.034) & \underline{0.626} (0.060) & \underline{0.960} (0.010) & \underline{0.997} (0.002) & \textbf{1.000} (0.000) \\
        \midrule
        MTDEA ($\hat{K}=2$)             & 4.226 (4.363)   & 0.697 (0.082) & 0.527 (0.052) & 0.870 (0.118) & 0.913 (0.139) & 0.919 (0.140)  \\
        MTDEA ($\hat{K}=4$)             & 1.812 (0.032)  & 0.710 (0.021) & 0.505 (0.046) & 0.935 (0.014) & 0.989 (0.005) & \textbf{1.000} (0.000) \\
        MTDEA ($\hat{K}=6$)             & 1.685 (0.123)  & 0.748 (0.039) & 0.564 (0.063) & 0.950 (0.010) & 0.995 (0.004) & \textbf{1.000} (0.000) \\
        \bottomrule
    \end{tabular}
    }

    \bigskip

    \centering
    \caption{Model performances on \textsc{MetaFam} under the \textbf{relation-type centric metrics}. We report mean and std across 3 random seeds. For our MTDEA, $\hat{K}$ denote the maximum number of tasks the architecture can model (\Cref{sec:attn-weights}).} \label{tab:synth-rel-results}
    \resizebox{\linewidth}{!}{
    \begin{tabular}{lrrrrrr}
        \toprule
        Models & {MR $\downarrow$} & {MRR $\uparrow$} & {Hits@1 $\uparrow$} & {Hits@3 $\uparrow$} & {Hits@5 $\uparrow$} & {Hits@10 $\uparrow$}  \\
        \midrule
        NBFNet-homo                     & 51.000 (0.000) & 0.020 (0.000) & 0.000 (0.000) & 0.000 (0.000) & 0.000 (0.000) & 0.000 (0.000) \\
        ISDEA-homo                      & 51.000 (0.000) & 0.020 (0.000) & 0.000 (0.000) & 0.000 (0.000) & 0.000 (0.000) & 0.000 (0.000) \\
        IS-DEA~\citep{kg-equivariance}  &  18.043 (1.957) & \underline{0.224} (0.006) & \underline{0.117} (0.009) & \underline{0.224} (0.012) & \underline{0.330} (0.044) & 0.473 (0.037) \\
        InGram~\citep{lee2023ingram} & \textbf{9.678} (1.749) & 0.199 (0.040) & 0.052 (0.034) & 0.179 (0.079) & 0.313 (0.081) & \textbf{0.603} (0.154) \\
        \midrule
        MTDEA ($\hat{K}=2$)             & \underline{15.001} (4.420) & \textbf{0.279} (0.062) & \textbf{0.163} (0.057) & \textbf{0.294} (0.076) & \textbf{0.387} (0.082) & \underline{0.545} (0.069) \\
        MTDEA ($\hat{K}=4$)              & 25.124 (7.784) & 0.126 (0.126) & 0.059 (0.102) & 0.101 (0.150) & 0.140 (0.169) & 0.233 (0.179) \\
        MTDEA ($\hat{K}=6$)              & 20.160 (1.464) & 0.110 (0.041)  & 0.023 (0.017) & 0.067 (0.056) & 0.128 (0.096) & 0.315 (0.127) \\
        \bottomrule
    \end{tabular}
    }

\end{table}

\subsection{More \textsc{WikiTopics-MT} Scenarios} \label{app:wikitopics-mt}

\Cref{tab:wikitopics-mt1-results,tab:wikitopics-mt2-result,tab:wikitopics-mt3-result,tab:wikitopics-mt4-result} show the additional experiment results on multi-task scenarios \textsc{WikiTopics-MT2}, \textsc{WikiTopics-MT3}, and \textsc{WikiTopics-MT4}. In most cases, our MTDEA model outperforms the baseline models on the MRR and Hits@1, while being comparable in other metrics. 

\begin{table}
    \centering
    \caption{Model performance on \textsc{WikiTopics-MT2} under the \textbf{dual-sampling metrics}, tested on two topics (\textsc{Location} and \textsc{Science}) not seen in training (\textsc{Sport} + \textsc{Health}). We report mean and std across 3 random seeds. For our MTDEA, $\hat{K}$ denotes the maximum number of tasks the architecture can model (\Cref{sec:attn-weights}).} \label{tab:wikitopics-mt2-result}
    \vspace{1pt}
    \resizebox{\linewidth}{!}{
    \begin{tabular}{lrrrrrrrrrr}
        \toprule
         & \multicolumn{4}{c}{Test on \textsc{Location} topic} & \multicolumn{4}{c}{Test on \textsc{Science} topic} \\
        \cmidrule(r){2-5} \cmidrule(l){6-9} 
        Models & {MR $\downarrow$} & {MRR $\uparrow$} & {Hits@1 $\uparrow$} & {Hits@10 $\uparrow$} & {MR $\downarrow$} & {MRR $\uparrow$} & {Hits@1 $\uparrow$} & {Hits@10 $\uparrow$} \\
        \midrule
         NBFNet-homo                    & 18.231 (1.027) & 0.109 (0.004) & 0.035 (0.002) & 0.292 (0.016) & 19.525 (0.908) & 0.091 (0.002) & 0.024 (0.001) & 0.235 (0.008)  \\
        IS-DEA-homo                     & 39.716 (1.022) & 0.027 (0.001) & 0.000 (0.000) & 0.000 (0.000)   & 37.581 (2.115) & 0.028 (0.001) & 0.000 (0.000) & 0.000 (0.000)  \\
        IS-DEA~\citep{kg-equivariance}  & \underline{14.873} (1.270)  & 0.462 (0.031) & 0.393 (0.038) & 0.569 (0.005)  & \underline{12.991} (1.067)  & 0.473 (0.024) & 0.390 (0.034) & 0.617 (0.023) \\
        InGram~\citep{lee2023ingram} & \textbf{7.234} (0.905) & 0.283 (0.047) & 0.091 (0.043) & \textbf{0.780} (0.049) & \textbf{8.447} (0.939) & 0.224 (0.069) & 0.063 (0.053) & \textbf{0.691} (0.052) \\
        \midrule
        MTDEA ($\hat{K}=2$)             & 15.619 (1.262)  & \underline{0.480} (0.178) & \underline{0.417} (0.023) & 0.557 (0.014) & 13.795 (3.410)  & \underline{0.482} (0.007) & \underline{0.406} (0.008) & \underline{0.595} (0.027) \\
        MTDEA ($\hat{K}=4$)             & 16.864 (0.704)  & 0.470 (0.042) & 0.405 (0.049) & 0.547 (0.037) & 13.302 (1.518)  & \textbf{0.483} (0.002) & \textbf{0.409} (0.004) & 0.590 (0.010) \\
        MTDEA ($\hat{K}=6$)             & 16.362 (0.933)  & \textbf{0.490} (0.001) & \textbf{0.431} (0.004) & \underline{0.558} (0.014) & 14.531 (1.121)  & 0.476 (0.007) & 0.405 (0.012) & 0.580 (0.024)  \\
        \bottomrule
    \end{tabular}
    }

    \bigskip

    \centering
    \caption{Model performance on \textsc{WikiTopics-MT3} under the \textbf{dual-sampling metrics}, tested on two topics (\textsc{Art} and \textsc{Infrastructure}) not seen in training (\textsc{People} + \textsc{Taxonomy}). We report mean and std across 3 random seeds. For our MTDEA, $\hat{K}$ denotes the maximum number of tasks the architecture can model (\Cref{sec:attn-weights}).} \label{tab:wikitopics-mt3-result}
    \vspace{1pt}
    \resizebox{\linewidth}{!}{
    \begin{tabular}{lrrrrrrrrrr}
        \toprule
         & \multicolumn{4}{c}{Test on \textsc{Art} topic} & \multicolumn{4}{c}{Test on \textsc{Infrastructure} topic} \\
        \cmidrule(r){2-5} \cmidrule(l){6-9} 
        Models & {MR $\downarrow$} & {MRR $\uparrow$} & {Hits@1 $\uparrow$} & {Hits@10 $\uparrow$} & {MR $\downarrow$} & {MRR $\uparrow$} & {Hits@1 $\uparrow$} & {Hits@10 $\uparrow$} \\
        \midrule
         NBFNet-homo                    & 19.495 (0.447) & 0.099 (0.002) & 0.030 (0.001) & 0.257 (0.006) & 15.622 (0.174) & 0.137 (0.002) & 0.056 (0.001) & 0.357 (0.004)  \\
        IS-DEA-homo                     & 37.31 (0.5739) & 0.028 (0.000) & 0.000 (0.000) & 0.000 (0.000) & 35.456 (1.634) & 0.029 (0.001) & 0.000 (0.000) & 0.000 (0.000)  \\
        IS-DEA~\citep{kg-equivariance}  & \underline{12.981} (0.753) & \underline{0.475} (0.049) & \underline{0.385} (0.062) & \textbf{0.638} (0.035) & 12.438 (0.904) & \underline{0.461} (0.011) & 0.304 (0.008) & \underline{0.696} (0.011) \\
        InGram~\citep{lee2023ingram} & \textbf{11.476} (1.041) & 0.154 (0.020) & 0.025 (0.014) & 0.527 (0.085) & \textbf{6.667} (1.988) & 0.333 (0.137) & 0.147 (0.143) & \textbf{0.789} (0.109) \\
        \midrule
        MTDEA ($\hat{K}=2$)             & 13.388 (1.928) & 0.464 (0.021) & 0.376 (0.030) & \underline{0.614} (0.041) & 11.887 (1.314) & 0.456 (0.013) & \underline{0.351} (0.010) & 0.664 (0.008) \\
        MTDEA ($\hat{K}=4$)             & 14.185 (1.181) & \textbf{0.488} (0.002) & \textbf{0.405} (0.010) & 0.598 (0.004) & \underline{10.224} (0.917) & \textbf{0.466} (0.007) & \textbf{0.353} (0.006) & 0.685 (0.012) \\
        MTDEA ($\hat{K}=6$)             & OOM  & OOM & OOM & OOM & OOM & OOM & OOM & OOM  \\
        \bottomrule
    \end{tabular}
    }

    \bigskip

    \centering
    \caption{Model performance on \textsc{WikiTopics-MT4} under the \textbf{dual-sampling metrics}, tested on two topics (\textsc{Health} and \textsc{Science}) not seen in training (\textsc{Location} + \textsc{Organization}). We report mean and std across 3 random seeds. For our MTDEA, $\hat{K}$ denotes the maximum number of tasks the architecture can model (\Cref{sec:attn-weights}).} \label{tab:wikitopics-mt4-result}
    \vspace{1pt}
    \resizebox{\linewidth}{!}{
    \begin{tabular}{lrrrrrrrrrr}
        \toprule
         & \multicolumn{4}{c}{Test on \textsc{Health} topic} & \multicolumn{4}{c}{Test on \textsc{Science} topic} \\
        \cmidrule(r){2-5} \cmidrule(l){6-9} 
        Models & {MR $\downarrow$} & {MRR $\uparrow$} & {Hits@1 $\uparrow$} & {Hits@10 $\uparrow$} & {MR $\downarrow$} & {MRR $\uparrow$} & {Hits@1 $\uparrow$} & {Hits@10 $\uparrow$} \\
        \midrule
         NBFNet-homo                    & 17.083 (0.070)  & 0.124 (0.001) & 0.046 (0.001) & 0.328 (0.002) & 19.998 (0.081) & 0.094 (0.001) & 0.026 (0.001) & 0.246 (0.002)  \\
        IS-DEA-homo                     & 32.286 (0.731)  & 0.032 (0.001) & 0.000 (0.000) & 0.000 (0.000) & 36.882 (4.891) & 0.029 (0.004) & 0.000 (0.000) & 0.000 (0.000)  \\
        IS-DEA~\citep{kg-equivariance}  & 7.199 (1.047)  & \textbf{0.681} (0.002) & \textbf{0.581} (0.003) & \underline{0.819} (0.009) & 6.696 (1.160) & \underline{0.707} (0.012) & 0.612 (0.019) & \textbf{0.847} (0.012) \\
        InGram~\citep{lee2023ingram} & 9.994 (1.236) & 0.158 (0.036) & 0.020 (0.012) & 0.605 (0.121) & 14.10 (1.794) & 0.126 (0.037) & 0.017 (0.018) & 0.392 (0.126)  \\
        \midrule
        MTDEA ($\hat{K}=2$)             & \underline{6.990} (0.686)  & \underline{0.678} (0.015) & \textbf{0.581} (0.015) & 0.803 (0.029) & \underline{6.477} (0.464) & 0.704 (0.004) & \underline{0.615} (0.005) & \underline{0.827} (0.005) \\
        MTDEA ($\hat{K}=4$)             & \textbf{6.387} (1.531)  & 0.680 (0.011) & \underline{0.575} (0.013) & \textbf{0.827} (0.019) & \textbf{5.994} (0.053) & \textbf{0.715} (0.011) & \textbf{0.634} (0.011) & \underline{0.827} (0.021) \\
        MTDEA ($\hat{K}=6$)             & OOM  & OOM & OOM & OOM & OOM & OOM & OOM & OOM  \\
        \bottomrule
    \end{tabular}
    }
\end{table}

\revision{
\subsection{General Dataset \textsc{FBNELL}}

We also experiment on the commonly used datasets \textsc{FB15K-237}~\cite{schlichtkrull2018modeling} and \textsc{NELL-995}~\cite{Nell995} by combining them and creating the \textsc{FBNELL} dataset. Specifically, the training graph consists of the 50 most frequent relation types and the test graph of the 100 most frequent ones for each dataset. We note that it is not clear from this construction whether \textsc{FBNELL} exhibits multi-task structures because the relation types might still be exchangeable despite belonging to different domains.

\Cref{tab:fbnell-result} shows the performance on \textsc{FBNELL}. Our model is on par (and sometimes outperforms) \revision{ISDEA}, even in this scenario where a multi-task structure may not be present. We associate smaller performance gaps to the simplicity of the constructed dataset, which does not seem to exhibit complex multi-task structures (the smallest $\hat{K}$ has the highest performance). 
Overall, our result suggests that in real-world scenarios might be advantageous to employ the MTDEA model because, even in the single-task setting (due to its regularization towards fewer relation equivalence classes and patterns), it obtains a similar, if not better, performance than the single-task counterpart ISDEA. \revision{Notably, MTDEA is outperformed by InGram on MR and Hit@10 metrics. Exploring the development of a multi-task version of InGram could potentially result in a model that always outperforms it, but we leave this avenue for future research}.
}

\begin{table}
    \centering
    \caption{Model performance on \textsc{FBNELL} under the \textbf{dual-sampling metrics}. We report mean and std across 3 random seeds. For our MTDEA, $\hat{K}$ denote the maximum number of tasks the architecture can model (\Cref{sec:attn-weights}). 
    } \label{tab:fbnell-result}
    \footnotesize
    \begin{tabular}{lrrrr}
        \toprule
        Models & {MR $\downarrow$} & {MRR $\uparrow$} & {Hits@1 $\uparrow$} & {Hits@10 $\uparrow$} \\
        \midrule
        NBFNet-homo                     & 14.116 (0.029)   & 0.129 (0.001) & 0.042 (0.001) & 0.379 (0.002)  \\
        ISDEA-homo                     & 31.625 (0.940)  & 0.033 (0.001) & 0.000 (0.000) & 0.000 (0.000)  \\
        ISDEA~\citep{kg-equivariance}  & 10.925 (0.383)  & \textbf{0.624} (0.010) & \textbf{0.562} (0.014) & 0.697 (0.010) \\
        InGram~\citep{lee2023ingram}  & \textbf{4.258} (0.612) & {0.456} (0.051) & 0.251 (0.052) & \textbf{0.932} (0.037)\\
        \midrule
        MTDEA ($\hat{K}=2$)             & \underline{9.106} (0.162)   & \underline{0.622} (0.012) & \underline{0.553} (0.010) & \underline{0.704} (0.024)  \\
        MTDEA ($\hat{K}=4$)             & 10.730 (0.666)  & 0.606 (0.006) & 0.543 (0.007) & 0.680 (0.023)  \\
        MTDEA ($\hat{K}=6$)             & 10.386 (0.683)  & 0.609 (0.015) & 0.547 (0.017) & 0.678 (0.008)  \\
        \bottomrule
    \end{tabular}
\end{table}

\newpage
\section{Time and Space Complexity}
\revision{
In this section we analyze the complexity of our model, focusing on \Cref{eq:asymcond-eqvlin}. We assume $L_1^{(t)}, L_2^{(t)}, L_3^{(t)}: \sR^{N \times N \times d} \to \sR^{N \times N \times d'}$ to be GNNs that output \emph{node representations} instead of pairwise representations, which is the setup we adopt in our experimental evaluation, as described in \Cref{app:exp-details}. We consider the feature dimension to be a constant.

For input graph with $N$ nodes and $R$ relation types, denote by $\Delta_{\mathrm{max}}$ the maximum node degree, and let $\hat{K}$ be the maximum number of tasks our architecture can model.
The time complexity of the $L_1^{(t)}$ and $L_2^{(t)}$ components in \Cref{eq:asymcond-eqvlin} is $\mathcal O(R N \Delta_{\mathrm{max}})$, as each of the $R$ relation types is processed using a standard GNN, which has time complexity $\mathcal O(N \Delta_{\mathrm{max}})$.
The time complexity of the $L_3^{(t)}$ component in \Cref{eq:asymcond-eqvlin} 
is $\mathcal O(R \hat{K} N \Delta_{\mathrm{max}})$, since each of the $R$ relation types iterates over the $\hat{K}$ tasks and for each of them aggregates all other relation types and processes the aggregation using a standard GNN, which has time complexity $\mathcal O(N \Delta_{\mathrm{max}})$. Therefore, our method, as described in \Cref{eq:asymcond-eqvlin}, has an overall time complexity $\mathcal O(R \hat{K} N \Delta_{\mathrm{max}})$. In practice, $\hat{K}$ is small compared to $R$ and $N$ (the maximum value we consider in our experiments is $\hat{K}=6$).

We note that the complexity of our method can be reduced if we replace the set aggregation inside $L_3^{(t)}$ to avoid excluding the current relation type. That is, if we substitute $\sum_{r'' \in \cR \setminus \{ r \}} \alpha_{r'', k} \hrepr^{(t)}_{\cdot, r'', \cdot, \cdot}$ with $\sum_{r'' \in \cR} \alpha_{r'', k} \hrepr^{(t)}_{\cdot, r'', \cdot, \cdot}$, then for each of the $\hat{K}$ tasks, the output of $L_3^{(t)}$ can be computed only once, instead of computing it for each relation type.
Therefore, the overall time complexity of our method can be improved to $\mathcal O(R N \Delta_{\mathrm{max}})$ by a simple change in the set aggregation function.

The space complexity of our method is $\mathcal O(R (N+ N \Delta_{\mathrm{max}}) + R\hat{K})$, as for each relation type we need to store $N$ 
node features and its connectivity, as well as the attention weights $\alpha \in [0, 1]^{R \times \hat{K}}$.
}

\section{Limitations}
Despite the contributions and advancements made in this work, there are aspects that can be further refined and explored in future works:
\begin{itemize}
\item \textbf{Scalability:} The proposed model may face challenges when scaling up to extremely large graphs, as memory demands might become prohibitively high. Further research is necessary to develop approximation techniques to handle such large-scale applications.

\item \textbf{Model complexity:} The proposed model introduces additional complexity compared to some baseline methods. Efforts to simplify the models while preserving their performance benefits are worth exploring in future research.

\item \textbf{Non-exchangeable relations:} There may exist cases where no relations are exchangeable, rendering it necessary to have a number of task partitions equal to the number of relations. In such situations, the benefits of our proposed method may be reduced. Investigating these cases remains an important avenue for future research.

\end{itemize}

\end{document}